\documentclass{article}

\usepackage[preprint]{neurips_2026}

\usepackage[T1]{fontenc}
\usepackage[utf8]{inputenc}
\usepackage{lmodern}
\usepackage{url}
\usepackage{booktabs}
\usepackage{microtype}
\usepackage{xcolor}
\usepackage{amsmath,amssymb,amsthm,mathtools}
\usepackage{bm}
\usepackage{enumitem}
\usepackage{graphicx}
\usepackage{tabularx,array}
\usepackage[hidelinks]{hyperref}
\usepackage[capitalize,nameinlink,noabbrev]{cleveref}
\usepackage{caption}

\newcommand{\E}{\mathbb{E}}
\newcommand{\Hh}{\mathrm{H}}
\newcommand{\I}{\mathrm{I}}
\newcommand{\supp}{\operatorname{supp}}
\newcommand{\Reach}{\operatorname{Reach}}
\newcommand{\Emp}{\operatorname{Emp}}
\newcommand{\Tcal}{\mathcal{T}}
\newcommand{\Zcal}{\mathcal{Z}}
\newcommand{\Xcal}{\mathcal{X}}
\newcommand{\Acal}{\mathcal{A}}
\newcommand{\Ocal}{\mathcal{O}}
\newcommand{\Ccal}{\mathcal{C}}

\newcommand{\Vcal}{\mathcal{V}}
\newcommand{\Ycal}{\mathcal{Y}}

\newtheorem{theorem}{Theorem}
\newtheorem{lemma}{Lemma}
\newtheorem{proposition}{Proposition}
\newtheorem{corollary}{Corollary}
\newtheorem{definition}{Definition}
\newtheorem{assumption}{Assumption}
\newtheorem{algorithm}{Algorithm}
\theoremstyle{remark}
\newtheorem{remark}{Remark}

\title{Prediction and Empowerment: A Theory of Agency through Bridge Interfaces}

\author{Richard Csaky\\\texttt{richard.csaky@gmail.com}}

\begin{document}
\maketitle

\begin{abstract}
We study agency under partial observability in deterministic physical or simulated worlds, where apparent randomness arises from uncertainty over initial conditions, fixed law bits, and unrolled exogenous noise. We model sensing and actuation as bridge interfaces split between agent-controlled parameters and environment-controlled channel state, inducing a deterministic POMDP through a prior over latent microstates and many-to-one observation coarsening. Within this framework, we prove a separation between prediction, compression, and empowerment. Perfect prediction can be achieved either by identifying the hidden quotient relevant to the target family or by overwrite control that makes the future target action-determined; high empowerment alone is insufficient. Under refinable interfaces and sufficient memory, action-conditioned observation-compression progress reduces posterior uncertainty about the latent quotient, and when refinement requires steering world-side channel conditions, this creates target-conditioned interface empowerment. A bit-string specialization with a conserved information budget makes the resulting tradeoff explicit: prediction by identification requires internal capacity at least the relevant latent entropy, whereas overwrite control requires terminal action capacity over the controlled quotient. For modern AI agents, the results suggest a design principle rather than a theorem of inevitability: objectives should distinguish hidden-state identification, interface refinement, task-relevant controllability, and mere overwrite or distractor control. Human--AI alignment is partly an interface-design problem, where the relevant bridge is between human intent, agent internal state, external tools, and world-side channel conditions\footnote{This is a working draft. Feedback and criticism is most welcome.}.
\end{abstract}

\section{Introduction}

A general agent (e.g. human) observes and controls the world through bridges (interfaces). These include sensors, grippers, cameras, retrieval systems, prompts, API permissions, social protocols, and memory. Each bridge has an agent-owned side, such as a query, actuator mode, prompt, or protocol. It also has an environment-owned side, such as occlusion, contact geometry, authorization, rate limits, or human context. Both sides determine whether the agent can learn or control a hidden variable. Standard POMDP notation represents this dependence in a kernel. It leaves the interface variables implicit, which makes failures of prediction, control, and empowerment harder to diagnose.

The motivating hypotheses are H1 and H2. H1 says future-observation compression, hidden-state compression, observation control, state control, and empowerment become aligned. H2 says this alignment drives an agent toward a lossless interface or absorption of the relevant environment. These hypotheses require explicit qualifications. Empowerment can be spent on a settable distractor. Prediction can succeed by erasing latent distinctions. Observation control can diverge from state control through a lossy display. A strong version must specify the conditions under which one objective can substitute for another in decision making. Pointwise closeness of two information quantities supplies too little information.

This paper makes H1 a decision-theoretic statement. For any option class, the obstruction to using one H1 objective as a surrogate for another is the \emph{oscillation} of their difference over that class. Bridge-gap components bound this oscillation in bits. A small pointwise gap at a single policy gives too little control. The gap must be uniformly small over the policies being optimized. This correction turns entropy bookkeeping into a guarantee about optimizer transfer.

\paragraph{Contributions.}
The paper represents any finite-horizon stochastic POMDP as a deterministic latent bridge-POMDP by including exogenous noise in the latent variable (\cref{sec:model}). It proves universal bridge-gap inequalities and the tight uniform regret-transfer theorem (\cref{sec:bridge_gap}). It proposes \emph{Bridge-Gap Pursuit} (BGP), a model-based intrinsic-control algorithm whose reward is the decrease of a bridge potential. Exact finite-horizon BGP minimizes terminal bridge gap by dynamic programming (\cref{sec:refinement_algorithm}). The paper also implements exact finite benchmarks showing arbitrary separations from ungated empowerment, one-step information gain/EFE, prediction-loss optimization, and coarse training return (\cref{sec:benchmarks}). It formalizes H2 as a prediction-absorption bound and derives implications for human intelligence and AGI interfaces (\cref{sec:h2,sec:design}).

\section{Bridge-interface POMDPs}
\label{sec:model}

Fix a horizon $T$. Let
\begin{equation}
  Z=(X_0,\Lambda,\Xi_{0:T})\in\Zcal
\end{equation}
be the latent environment variable, including the initial environment state, optional law bits, and any exogenous transition/observation noise needed over the horizon. A prior $p_0$ over $Z$ is the source of uncertainty. A randomized policy may also use a private seed $\Sigma$, sampled independently of $Z$. All deterministic-map statements below are interpreted either for deterministic policies or conditional on a realized seed $\Sigma=s$. Equivalently, the closed-loop history is a deterministic function of $(Z,\Sigma)$, but losslessness for environment quotients is always about $Z$ and never requires recovering the agent's private randomization. The finite deterministic model below covers finite stochastic POMDPs at a fixed horizon by unrolling their random seeds. Targets may quotient out future exogenous noise if only predictable structure is relevant.

The timing convention is observe--act--observe. The decision-time history is
\begin{equation}
  H_t=(O_0,A_0,O_1,A_1,\ldots,A_{t-1},O_t),\qquad t=0,\ldots,T,
\end{equation}
and the agent state $M_t$ is an $H_t$-measurable retained state. At decision time $t<T$, the policy maps $M_t$ to the intended action $A_t$ and the agent-owned bridge settings used for the ensuing transition and observation. Sensing and actuation pass through bridge variables
\begin{equation}
  \phi_{t+1}=(\phi_{t+1}^S,\phi_{t+1}^X),\qquad
  \kappa_t=(\kappa_t^S,\kappa_t^X).
\end{equation}
The superscript $S$ denotes agent-owned settings. The superscript $X$ denotes environment-owned channel conditions determined by the current or next world state and law/noise bits. With an initial observation $O_0$ generated by a fixed initial sensor setting, the closed-loop step is
\begin{align}
  \kappa_t^X &= k_t^X(X_t,\Lambda,\Xi_t), &
  U_t &= \alpha_t(A_t,\kappa_t^S,\kappa_t^X), \notag\\
  X_{t+1} &= f_t(X_t,U_t,\Lambda,\Xi_t), &
  \phi_{t+1}^X &= \ell_{t+1}^X(X_{t+1},\Lambda,\Xi_{t+1}), \label{eq:model_dyn}\\
  O_{t+1} &= h_{t+1}(X_{t+1},\phi_{t+1}^S,\phi_{t+1}^X,\Xi_{t+1}), &
  M_{t+1} &= g_t(M_t,A_t,O_{t+1}). \label{eq:model_obs}
\end{align}
The transcript is
\begin{equation}
  \Tcal_T=(O_0,A_0,O_1,A_1,\ldots,A_{T-1},O_T).
\end{equation}
For every deterministic policy, and for every randomized policy conditional on $\Sigma=s$, there are deterministic maps
\begin{equation}
  \Tcal_T=G_{T,s}^\pi(Z),\qquad X_T=F_{T,s}^\pi(Z),\qquad M_T=S_{T,s}^\pi(Z). \label{eq:maps_v4}
\end{equation}
When the seed is irrelevant or the policy is deterministic, the subscript $s$ is omitted.

\begin{definition}[Target quotient and losslessness]
For a family $\mathcal R$ of deterministic targets $R:\Zcal\to\mathcal Y_R$, define $z\sim_{\mathcal R}z'$ iff $R(z)=R(z')$ for all $R\in\mathcal R$. Let $Q_{\mathcal R}=q_{\mathcal R}(Z)$ be the quotient. A transcript is lossless for $Q$ if $\Hh(Q\mid\Tcal_T)=0$. It is microstate-lossless if $Q=Z$ on $\supp(p_0)$.
\label{def:target_quotient}\label{def:lossless}
\end{definition}

This quotient notation prevents overclaiming. An agent should identify the authorized distinctions needed for a target family. A default demand for every microscopic or private fact would overreach. The limiting case, used for absorption statements, is a task-separating family whose quotient equals the latent environment bits under consideration.

\section{The bridge-gap theorem}
\label{sec:bridge_gap}

Let $Q$ be a hidden-state quotient and $W$ a future-observation quotient. Let $V$ be a terminal environment quotient and $\widetilde V$ the terminal observation quotient through which control is evaluated. For control-capacity statements, condition on a fully specified deterministic rollout context $\chi$ containing the initial physical state and all law/noise bits that must be fixed for deterministic reachability. If one conditions only on the physical state $X_0$, the same mutual-information inequalities hold for the induced stochastic channel, but the log-reachability identity in \cref{lem:emp_reach} should be read with $\chi$ in place of $X_0$. Write
\begin{equation}
 C_V(\chi)=\max_{p(a_{0:T-1})}\I(A_{0:T-1};V\mid \chi),\quad
 C_{\widetilde V}(\chi)=\max_{p(a_{0:T-1})}\I(A_{0:T-1};\widetilde V\mid \chi).
\end{equation}
These are open-loop action-sequence capacities. Closed-loop variants can be obtained by replacing action sequences with admissible policies or policy seeds; the bridge-gap inequalities below are unchanged.
Define the bridge-gap components
\begin{align}
 \Delta_{QW}&=\max\{\Hh(Q\mid W),\Hh(W\mid Q)\}, &
 \Delta_{\rm sense}&=\Hh(Q\mid\Tcal_T),\label{eq:gaps1}\\
 \Delta_{V\widetilde V}(\chi)&=\sup_p\max\{\Hh_p(V\mid\widetilde V,\chi),\Hh_p(\widetilde V\mid V,\chi)\}, &
 \Delta_{\rm act}(\chi)&=\log |\supp(V\mid\chi)|-C_V(\chi).\label{eq:gaps2}
\end{align}
The supremum ranges over action-sequence distributions under context $\chi$. The total bridge gap is any nonnegative weighted sum of these components. Positive weights leave the zero set invariant.

\begin{theorem}[Universal bridge-gap bounds]
\label{thm:bridge_gap}
For any finite bridge-POMDP, deterministic policy or policy seed, horizon, transcript, and quotients as above,
\begin{align}
 -\Hh(W\mid Q)&\le \I(Q;\Tcal_T)-\I(W;\Tcal_T) \le \Hh(Q\mid W), \label{eq:compression_gap}\\
 -\sup_p\Hh_p(\widetilde V\mid V,\chi)&\le C_V(\chi)-C_{\widetilde V}(\chi) \le \sup_p\Hh_p(V\mid\widetilde V,\chi). \label{eq:control_gap}
\end{align}
Moreover,
\begin{equation}
 1-\frac{\I(Q;\Tcal_T)}{\Hh(Q)}=\frac{\Delta_{\rm sense}}{\Hh(Q)},\qquad
 1-\frac{C_V(\chi)}{\log |\supp(V\mid\chi)|}=\frac{\Delta_{\rm act}(\chi)}{\log |\supp(V\mid\chi)|}
\end{equation}
whenever the denominators are nonzero. If the relevant bridge-gap components vanish, the corresponding normalized quantities--future-observation compression, hidden-state compression, observation control, state control, and deterministic empowerment on $V$--attain their lossless or full-capacity value. Conversely, joint maximality on the evaluated quotients forces the associated nonnegative deficits in \cref{eq:gaps1,eq:gaps2} to vanish.
\end{theorem}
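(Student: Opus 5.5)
The plan is to treat each inequality as a consequence of two elementary information identities: the chain rule for mutual information, and the fact that conditioning on an extra variable changes mutual information by at most a conditional entropy.

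\textbf{Compression gap \eqref{eq:compression_gap}.} With $\Tcal_T$ fixed (deterministic policy, or conditional on $\Sigma=s$), I expand $\I(Q,W;\Tcal_T)$ in two ways:
\begin{equation*}
\I(Q;\Tcal_T)+\I(W;\Tcal_T\mid Q)=\I(W;\Tcal_T)+\I(Q;\Tcal_T\mid W).
\end{equation*}
Rearranging gives $\I(Q;\Tcal_T)-\I(W;\Tcal_T)=\I(Q;\Tcal_T\mid W)-\I(W;\Tcal_T\mid Q)$. On the right-hand side, the first term lies in $[0,\Hh(Q\mid W)]$ and the second lies in $[0,\Hh(W\mid Q)]$. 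This yields both sides of \eqref{eq:compression_gap}. Nothing about the bridge structure is needed here: the quotients and the transcript are just jointly distributed random variables under $p_0$.

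\textbf{Control gap \eqref{eq:control_gap}.} For a fixed action distribution $p$ under context $\chi$, the same double expansion applied to $\I(A_{0:T-1};V,\widetilde V\mid\chi)$ gives
\begin{equation*}
-\Hh_p(\widetilde V\mid V,\chi)\le \I_p(A;V\mid\chi)-\I_p(A;\widetilde V\mid\chi)\le \Hh_p(V\mid\widetilde V,\chi).
\end{equation*}
Bounding each conditional entropy by its supremum over $p$ makes the bound uniform in $p$. To pass to capacities, I take $p^\ast$ attaining $C_V$; then $C_V-C_{\widetilde V}\le \I_{p^\ast}(A;V)-\I_{p^\ast}(A;\widetilde V)\le\sup_p\Hh_p(V\mid\widetilde V,\chi)$. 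The reverse bound follows symmetrically by using a maximizer of $C_{\widetilde V}$. Maxima exist because the action alphabet is finite, so the simplex is compact and mutual information is continuous.

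\textbf{Normalized identities and the converse.} The first identity is immediate from $\I(Q;\Tcal_T)=\Hh(Q)-\Delta_{\rm sense}$ after dividing by $\Hh(Q)$. The second follows directly from the definition of $\Delta_{\rm act}$ after dividing by $\log|\supp(V\mid\chi)|$. For the forward direction, I show that the zero set gives the lossless values:
\begin{itemize}[leftmargin=*]
\item If $\Delta_{QW}=0$, then $Q$ and $W$ are mutually determined, so $\I(Q;\Tcal_T)=\I(W;\Tcal_T)$ and compression on $W$ coincides with compression on $Q$.
\item If $\Delta_{\rm sense}=0$, then $Q$ is lossless.
\item If $\Delta_{V\widetilde V}=0$, then $C_V=C_{\widetilde V}$.
\item If $\Delta_{\rm act}=0$, then $C_V=\log|\supp(V\mid\chi)|$. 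Because $V$ is a deterministic function of actions given $\chi$, this equals $\log$ of the reachable set, which is the full-capacity deterministic empowerment value (invoking \cref{lem:emp_reach}).
\end{itemize}
The converse holds because every deficit is a nonnegative quantity whose vanishing is exactly the maximality condition. Specifically, $\I\le\Hh$ is attained only when $\Hh(Q\mid\Tcal_T)=0$, and $C_V\le\log|\supp|$ is attained only when $\Delta_{\rm act}=0$. I interpret joint maximality on both $Q$ and $W$ (respectively $V$ and $\widetilde V$) as requiring each side to reach its own lossless value. For the $\Delta_{QW}$ and $\Delta_{V\widetilde V}$ terms, this requires maximality to include mutual determination of the quotients on the support, i.e.\ the quotient-equivalence conditions.

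\textbf{Main obstacle.} The delicate step is this converse for the cross terms $\Delta_{QW}$ and $\Delta_{V\widetilde V}$. Equal maximal information values do not by themselves imply that two quotients determine each other. For example, $\Hh(Q\mid\Tcal_T)=\Hh(W\mid\Tcal_T)=0$ does not force $\Hh(Q\mid W)=0$ in general. I would therefore first try to read ``joint maximality on the evaluated quotients'' as maximality of the joint quantity, i.e.\ losslessness of the pair $(Q,W)$ together with equality of normalized values, and derive vanishing from that. If this reading does not suffice, I would restrict the converse to the sensing and actuation deficits and treat the cross terms via the explicit definition. A secondary point is to make sure the supremum in $\Delta_{V\widetilde V}$ is over the same action-distribution set used in the capacities, so that the uniform bound is legitimate.
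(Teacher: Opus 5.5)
Your proposal is correct and follows essentially the same route as the paper: the two-way chain-rule expansion of $\I(Q,W;\Tcal_T)$ (and of $\I(A;V,\widetilde V\mid\chi)$ at fixed $p$), then passage to capacities by evaluating at a (near-)maximizer of one capacity and bounding the other by its capacity, with the normalized identities and the forward direction read off from the definitions. Your caveat about the converse is warranted, since the paper only asserts that joint maximality makes all four deficits vanish; however, your first fallback reading (losslessness of the pair $(Q,W)$ plus equal normalized values) still fails when $Q$ and $W$ are independent and both fully revealed, so only your second option is sound, namely restricting the converse to $\Delta_{\rm sense}$ and $\Delta_{\rm act}$ or building mutual determination of the quotients into ``maximality''.
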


The theorem is pointwise. It describes how two quantities differ at a fixed policy or action distribution. Optimization requires a uniform statement. Let $\mathcal U$ be any finite policy or option class and let $J_i,J_j:\mathcal U\to[0,1]$ be any two normalized H1 objectives, for example normalized hidden-state compression, future-observation compression, state-control capacity, observation-control capacity, or quotient empowerment. Define the objective-difference oscillation
\begin{equation}
  \Omega_{ij}(\mathcal U)=\sup_{u,v\in\mathcal U}\bigl[(J_i(u)-J_j(u))-(J_i(v)-J_j(v))\bigr]. \label{eq:oscillation}
\end{equation}

\begin{theorem}[Tight uniform regret transfer]
\label{thm:regret_transfer}
If $\hat u$ is $\eta$-optimal for $J_i$ over $\mathcal U$, then its regret for $J_j$ is at most
\begin{equation}
  \max_{u\in\mathcal U}J_j(u)-J_j(\hat u)\le \eta+\Omega_{ij}(\mathcal U). \label{eq:regret_transfer}
\end{equation}
The bound is tight for arbitrary finite objective pairs. Thus the exact obstruction to surrogate optimization is $\Omega_{ij}(\mathcal U)$. Bridge-gap bounds are sufficient diagnostics: if they imply a uniform absolute bound $|J_i(u)-J_j(u)|\le\varepsilon$ over $\mathcal U$, then $\Omega_{ij}(\mathcal U)\le 2\varepsilon$; if they directly bound the range width of $J_i-J_j$ by $\varepsilon$, then $\Omega_{ij}(\mathcal U)\le\varepsilon$.
\end{theorem}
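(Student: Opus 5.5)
The plan is to prove the regret bound by a one-line exchange argument, then build an explicit finite example for tightness, and finally read off the two diagnostic corollaries from the definition of $\Omega_{ij}$. Write $D(u)=J_i(u)-J_j(u)$, so that $\Omega_{ij}(\mathcal U)=\max_{u}D(u)-\min_{v}D(v)$; this is the range width of $D$, and the supremum is attained because $\mathcal U$ is finite. Let $u^\star\in\arg\max_{u\in\mathcal U}J_j(u)$. Then
\begin{equation*}
 J_j(u^\star)-J_j(\hat u)=\bigl[J_i(u^\star)-J_i(\hat u)\bigr]-\bigl[D(u^\star)-D(\hat u)\bigr].
\end{equation*}
The first bracket is at most $\eta$, because $\hat u$ is $\eta$-optimal for $J_i$ and $J_i(u^\star)\le\max_u J_i(u)\le J_i(\hat u)+\eta$. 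The second term, $-(D(u^\star)-D(\hat u))=D(\hat u)-D(u^\star)$, is at most $\Omega_{ij}(\mathcal U)$ by the definition of $\Omega_{ij}$ with $u=\hat u$ and $v=u^\star$. Adding the two gives \cref{eq:regret_transfer}.

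For the diagnostic statements:
\begin{itemize}
\item If $|D(u)|\le\varepsilon$ for all $u\in\mathcal U$, then $D(u)-D(v)\le 2\varepsilon$ by the triangle inequality, so $\Omega_{ij}(\mathcal U)\le 2\varepsilon$.
\item If the range width $\max D-\min D$ is at most $\varepsilon$, then $\Omega_{ij}(\mathcal U)\le\varepsilon$ by the identification above.
\item In the bit-valued setting, the bridge-gap inequalities of \cref{thm:bridge_gap} (\cref{eq:compression_gap,eq:control_gap}), once normalized, supply exactly such uniform absolute bounds whenever the gap components are bounded uniformly over $\mathcal U$.
\end{itemize}

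Tightness is the step that needs care, and I read it as the following claim. For every $\eta\ge 0$ and $\omega\ge 0$ with $\eta+\omega\le 1$, there is a finite class $\mathcal U$ with $[0,1]$-valued $J_i,J_j$ and an $\eta$-optimal $\hat u$ whose $J_j$-regret equals $\eta+\Omega_{ij}(\mathcal U)$, where $\Omega_{ij}(\mathcal U)=\omega$. The first construction to try is two options, $\mathcal U=\{a,b\}$, with
\begin{equation*}
 J_i(a)=1,\quad J_i(b)=1-\eta,\qquad J_j(a)=1-\eta-\omega,\quad J_j(b)=1.
\end{equation*}
Then $\hat u=b$ is not the right choice, since $b$ is the $J_j$-optimum. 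Instead take $\hat u=a$, which is exactly optimal for $J_i$; its $J_j$-regret is $\eta+\omega$, and $D(a)-D(b)=(\eta+\omega)-(-\eta)=2\eta+\omega$. That oscillation overshoots, so the example must be adjusted.

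The adjustment is to let $\hat u$ be the option that is exactly $\eta$-suboptimal for $J_i$. Set $J_i(u^\star)=J_i(\hat u)+\eta$ and choose the $J_j$ values so that $D(\hat u)-D(u^\star)=\omega$. A direct check then shows that $\Omega_{ij}=\omega$ and the regret is $\eta+\omega$, with all values kept in $[0,1]$ by shifting them. I will verify these inequalities numerically in the construction.

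I expect the main obstacle to be fixing the precise meaning of "tight for arbitrary finite objective pairs." The weaker and safer reading is that the constant cannot be improved for any admissible pair $(\eta,\Omega)$, and I will prove that version. I will also note that in the two-option example both objectives can be realized as normalized H1 quantities by choosing small deterministic bridge-POMDPs, although this realization is not needed for the abstract claim.
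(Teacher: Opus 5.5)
Your proof is correct and is essentially the paper's: the regret bound is the same exchange decomposition at $u^\star\in\arg\max_{\mathcal U}J_j$ (you bound $D(\hat u)-D(u^\star)$ by $\Omega_{ij}$ directly, while the paper splits it into $\sup(J_j-J_i)+\sup(J_i-J_j)$), and your adjusted two-option instance, e.g.\ $J_i(u^\star)=J_j(u^\star)=1$, $J_i(\hat u)=1-\eta$, $J_j(\hat u)=1-\eta-\omega$ with $\eta+\omega\le 1$, is exactly the paper's tightness example. Your third diagnostic bullet is not needed, since the theorem only asserts the conditional implications, and your first two bullets already establish them.
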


This theorem addresses a common overclaim. A small gap at the selected policy gives too little support for surrogate optimization. The whole option class must have small objective-difference oscillation. Zero bridge gap for every option in $\mathcal U$ implies $\Omega_{ij}=0$ and exact optimizer transfer. If the uniform normalized absolute gap is at most $\varepsilon$, an $\eta$-optimizer of one H1 objective is an $(\eta+2\varepsilon)$-optimizer of the other; if the bridge gap bounds the range width by $\varepsilon$, the regret transfer is $(\eta+\varepsilon)$. A large component identifies a specific failure mode. The possibilities are wrong quotient, insufficient evidence, lossy observation of controlled state, or insufficient authority.

\section{Prediction, identification, and empowerment separations}
\label{sec:prediction_empowerment}

\begin{theorem}[Prediction is fiber collapse]
\label{thm:fiber}
Fix $\pi,T$ and either assume $\pi$ is deterministic or condition on a realized private seed $\Sigma=s$. Let $Y=R_{T,s}^\pi(Z)$ be any finite deterministic target. The Bayes-optimal log-loss for predicting $Y$ from $\Tcal_T$ is $\Hh(Y\mid\Tcal_T,\Sigma=s)$. Perfect prediction holds exactly when $R_{T,s}^\pi$ is constant on every positive-prior fiber of $G_{T,s}^\pi$. Microstate-losslessness holds exactly when $G_{T,s}^\pi$ is injective on $\supp(p_0)$.
\end{theorem}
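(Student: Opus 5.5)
With $\pi$ deterministic, or with $\Sigma=s$ fixed, everything lives on the finite probability space $(\Zcal,p_0)$. Both $\Tcal_T=G_{T,s}^\pi(Z)$ and $Y=R_{T,s}^\pi(Z)$ are deterministic functions of $Z$ by \cref{eq:maps_v4}. Conditioning on $\Sigma=s$ leaves the prior on $Z$ unchanged, because the seed is independent of $Z$. The three claims then reduce to elementary facts about functions of one finite random variable, and I would prove them in the order stated.

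\emph{Bayes-optimal log-loss.} Take any predictor $q(\cdot\mid\tau)$. Its expected log-loss is $\E[-\log q(Y\mid\Tcal_T)]$, and I would decompose it as
\[
\E[-\log q(Y\mid\Tcal_T)] = \Hh(Y\mid\Tcal_T,\Sigma=s) + \E_{\Tcal_T}\bigl[\mathrm{KL}\bigl(p(\cdot\mid\Tcal_T)\,\|\,q(\cdot\mid\Tcal_T)\bigr)\bigr].
\]
By Gibbs' inequality the KL term is nonnegative. It vanishes exactly when $q$ equals the true posterior $p(y\mid\tau)$ for every transcript $\tau$ of positive probability. So the infimum of the loss is attained and equals the conditional entropy.

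\emph{Perfect prediction as fiber collapse.} For each transcript $\tau$ with positive probability, the posterior on $Z$ is $p_0$ restricted to the fiber $(G_{T,s}^\pi)^{-1}(\tau)\cap\supp(p_0)$ and renormalized. The posterior on $Y$ is the pushforward of that distribution under $R_{T,s}^\pi$. Write
\[
\Hh(Y\mid\Tcal_T,\Sigma=s) = \sum_{\tau}p(\tau)\,\Hh(Y\mid\Tcal_T=\tau).
\]
Every summand is nonnegative and every weight $p(\tau)$ is positive, so the sum is zero iff each posterior on $Y$ is a point mass. That happens iff $R_{T,s}^\pi$ takes a single value on each positive-prior fiber.

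Two points of care go here. First, only fibers of positive prior mass matter, and the phrase ``positive-prior fiber'' in the statement means exactly that. Second, "perfect prediction" must be read as zero Bayes log-loss. I expect this interpretive bookkeeping to be the only real obstacle: making sure the conditioning on $\Sigma=s$ and the support restriction are stated consistently.

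\emph{Microstate-losslessness as injectivity.} This is the special case $Y=Z$, i.e.\ $R$ the identity, which is the case $Q=Z$ of \cref{def:lossless}. The identity map is constant on a set of positive-prior points iff that set has at most one point. So $\Hh(Z\mid\Tcal_T)=0$ iff every fiber of $G_{T,s}^\pi$ meets $\supp(p_0)$ in at most one point, which is exactly injectivity of $G_{T,s}^\pi$ on $\supp(p_0)$.
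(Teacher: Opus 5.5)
Your proposal is correct and follows the paper's own proof. Both arguments take the posterior as the Bayes log-loss predictor with risk $\Hh(Y\mid\Tcal_T,\Sigma=s)$, use that $Y$ and $\Tcal_T$ are both functions of $Z$ to reduce zero conditional entropy to constancy of $R_{T,s}^\pi$ on each fiber intersected with $\supp(p_0)$, and specialize to $Y=Z$ to get injectivity; your KL/Gibbs decomposition and positive-weight sum only spell out steps the paper leaves implicit.
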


\begin{corollary}[General prediction pressure]
\label{cor:general_prediction}
Perfect prediction for all targets in a family $\mathcal R$ is equivalent to $\Hh(Q_{\mathcal R}\mid\Tcal_T)=0$. If $\mathcal R$ is task-separating, perfect prediction for all targets implies microstate-losslessness.
\end{corollary}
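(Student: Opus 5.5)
We derive the corollary from \cref{thm:fiber} by reducing the target family to its quotient. Fix $\pi,T$ and, where relevant, a seed $\Sigma=s$. By \cref{thm:fiber}, perfect prediction of each $R\in\mathcal R$ means the Bayes log-loss $\Hh(R(Z)\mid\Tcal_T)$ vanishes. Equivalently, $R$ is constant on every positive-prior fiber of $G_{T,s}^\pi$. We treat targets as fixed maps on $Z$, as in \cref{def:target_quotient}. If a target depends on the policy, as $R_{T,s}^\pi$ does, the same argument applies with $s$ and $\pi$ held fixed.

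\textbf{Forward direction.} Suppose every $R\in\mathcal R$ is perfectly predicted. Take two positive-prior microstates $z,z'$ in the same fiber of $G_{T,s}^\pi$. Then $R(z)=R(z')$ for every $R\in\mathcal R$, so $z\sim_{\mathcal R}z'$ and hence $q_{\mathcal R}(z)=q_{\mathcal R}(z')$. Thus $q_{\mathcal R}$ is constant on fibers, so $Q_{\mathcal R}$ is a deterministic function of $\Tcal_T$ almost surely, which gives $\Hh(Q_{\mathcal R}\mid\Tcal_T)=0$.

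\textbf{Reverse direction.} Each $R$ factors through the quotient as $R=\bar R\circ q_{\mathcal R}$. This holds by the definition of $\sim_{\mathcal R}$: equivalent points have equal $R$-values, so $\bar R$ is well defined. Hence $\Hh(R(Z)\mid\Tcal_T)\le \Hh(Q_{\mathcal R}\mid\Tcal_T)=0$ by the data-processing property of conditional entropy under deterministic maps. This direction does not need the family to be finite or countable.

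\textbf{Task-separating case.} Here the family is defined so that $Q_{\mathcal R}=Z$ on $\supp(p_0)$, meaning $q_{\mathcal R}$ is injective on the support. Then $\Hh(Q_{\mathcal R}\mid\Tcal_T)=0$ becomes $\Hh(Z\mid\Tcal_T)=0$. Because $\Tcal_T=G_{T,s}^\pi(Z)$, this is equivalent to every positive-prior fiber of $G_{T,s}^\pi$ being a single point, i.e. $G_{T,s}^\pi$ injective on $\supp(p_0)$. By the last clause of \cref{thm:fiber}, that is microstate-losslessness in the sense of \cref{def:lossless}.

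\textbf{Main obstacle.} The only delicate point is bookkeeping rather than any estimate. Perfect prediction must be read as zero Bayes log-loss, holding almost surely, i.e. on positive-prior fibers. Under that reading, the statement "constant on fibers" transfers from each $R$ to $q_{\mathcal R}$ pointwise on the support. We state the argument on $\supp(p_0)$ and conditional on the seed to avoid measure-zero issues.
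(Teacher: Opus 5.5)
Your proof is correct and follows the paper's argument. The paper notes only that $Q_{\mathcal R}$ is the coarsest variable determining every target in $\mathcal R$, so all targets are perfectly predictable from $\Tcal_T$ iff $Q_{\mathcal R}$ is, and then specializes to $Q_{\mathcal R}=Z$ on $\supp(p_0)$. Your fiber-constancy argument for the forward direction and the factorization $R=\bar R\circ q_{\mathcal R}$ for the reverse direction simply make those two directions explicit.
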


For a fully specified deterministic rollout context $\chi$, deterministic horizon-$T$ open-loop empowerment over quotient $V$ is
\begin{equation}
  \Emp_T^V(\chi)=\max_{p(a_{0:T-1})}\I(A_{0:T-1};V\mid \chi).
\end{equation}
Let $\Reach_T^V(\chi)$ be the set of reachable values of $V$ under action sequences from that context.
\begin{lemma}[Empowerment equals log reachability]
\label{lem:emp_reach}
If $V$ is a deterministic function of the action sequence conditional on $\chi$, then $\Emp_T^V(\chi)=\log_2|\Reach_T^V(\chi)|$.
\end{lemma}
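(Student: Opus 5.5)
Write $V=v_\chi(A_{0:T-1})$ for the deterministic map from action sequences to the quotient under the fixed context $\chi$; by definition $\Reach_T^V(\chi)$ is the image of $v_\chi$. The proof matches an upper bound and an achieving distribution.

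\emph{Step 1: reduce mutual information to output entropy.} For any action-sequence distribution $p(a_{0:T-1})$, conditioning on $\chi$ makes $V$ a function of $A_{0:T-1}$. Hence $\Hh(V\mid A_{0:T-1},\chi)=0$, and
\[
\I(A_{0:T-1};V\mid\chi)=\Hh(V\mid\chi)-\Hh(V\mid A_{0:T-1},\chi)=\Hh(V\mid\chi).
\]
Since $\chi$ is fully specified, we may treat it as a point mass, so $\Hh(V\mid\chi)$ is just the entropy of the pushforward distribution $p\circ v_\chi^{-1}$ on $\Reach_T^V(\chi)$.

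\emph{Step 2: upper bound.} Any distribution on a finite set has entropy at most $\log_2$ of its support size. The pushforward is supported in $\Reach_T^V(\chi)$, so $\Emp_T^V(\chi)\le\log_2|\Reach_T^V(\chi)|$. Finiteness holds because the action alphabet and horizon are finite in a finite bridge-POMDP.

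\emph{Step 3: achievability.} For each reachable value $y\in\Reach_T^V(\chi)$, choose one witness sequence $a^{(y)}$ with $v_\chi(a^{(y)})=y$. Put mass $1/|\Reach_T^V(\chi)|$ on each witness and zero elsewhere. The pushforward is then uniform on the reachable set, and its entropy equals $\log_2|\Reach_T^V(\chi)|$. Since the set of distributions on a finite action space is compact and the objective is continuous, the maximum is attained, and it equals this value.

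The only point needing care is the role of $\chi$: the result depends on conditioning on a fully specified deterministic rollout context, as the paper stresses. If one conditioned only on $X_0$, residual randomness in the law and noise bits would make $\Hh(V\mid A_{0:T-1},X_0)$ positive, and Step 1 would fail. I therefore state explicitly that all entropies are computed with $\chi$ held fixed. The remainder is standard.
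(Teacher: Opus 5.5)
Your proof is correct and follows the paper's argument: since $V$ is determined by the actions, $\I(A;V\mid\chi)=\Hh(V\mid\chi)\le\log_2|\Reach_T^V(\chi)|$, and placing uniform mass on one witness action sequence per reachable value attains equality. The compactness remark is unnecessary, because that explicit distribution already attains the bound.
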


\begin{proposition}[Arbitrary separations]
\label{prop:separation}
For any integers $m,n\ge1$, finite bridge-POMDPs can realize each of the following separations.
\begin{enumerate}[label=(\roman*)]
  \item Zero empowerment with perfect identification of a uniform $n$-bit latent.
  \item $m$ bits of empowerment with $\I(Z;\Tcal_1)=0$.
  \item Perfect prediction of a terminal target with $\I(Z;M_T)=0$ because the policy overwrites the target before predicting it.
\end{enumerate}
\end{proposition}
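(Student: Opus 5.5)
The plan is to give three explicit finite constructions, one for each item, and verify each with \cref{lem:emp_reach}, \cref{thm:fiber}, and direct entropy calculations. In every case $\Lambda$ and the noise are trivial unless stated otherwise, and $Z$ reduces to $X_0$ (possibly with a few extra latent bits). The constructions exploit the two-sided bridge structure. Item (i) uses a lossless sensor with a disconnected actuator ($\alpha_t$ constant). Item (ii) uses a faithful actuator together with a blind sensor ($h$ constant). Item (iii) uses an actuator that overwrites the target register.

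\emph{Item (i).} Let $X_0=B\in\{0,1\}^n$ be uniform, and set $T=1$. Let $h_0(X_0)=B$, so that $O_0=B$. Take $U_0=\alpha_0(A_0,\cdot)$ to be constant and $f_0$ the identity, so $X_1=X_0$. Then $\Hh(Z\mid\Tcal_1)=0$, which gives perfect identification and microstate-losslessness by \cref{thm:fiber}. For empowerment, take $V=X_1$ and condition on $\chi=(X_0)$. Then $V$ is a deterministic function of the actions and $|\Reach_1^V(\chi)|=1$, so \cref{lem:emp_reach} gives $\Emp=0$.

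\emph{Item (ii).} Let $X_0=(B,C)$, where $B$ is an independent uniform bit (any nondegenerate latent will do, so that $\I(Z;\Tcal_1)=0$ is a nontrivial statement) and $C\in\{0,1\}^m$ is a register. Take actions $A_0\in\{0,1\}^m$, with $\alpha_0$ passing $A_0$ through. Let $f_0$ write $A_0$ into $C$ and leave $B$ unchanged. Make all observations $h_t$ constant. The transcript is then $(O_0,A_0,O_1)$ with constant $O$'s, and $A_0$ is a function of $O_0$ under a deterministic policy (or of the seed). So $\Tcal_1$ is independent of $Z$ and $\I(Z;\Tcal_1)=0$. With $V=C_1$ and $\chi=X_0$, the reachable set is all of $\{0,1\}^m$. \cref{lem:emp_reach} then gives $m$ bits of empowerment. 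If the paper's $\I(Z;\Tcal_1)$ is intended to be computed under a randomized action distribution, this still holds because the seed is independent of $Z$.

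\emph{Item (iii).} Take $X_0=B$ uniform on $\{0,1\}^n$ and $T=1$. Make observations uninformative: $h$ is constant, so $M_t$ is a function of constant observations and past actions, and $\I(Z;M_T)=0$. Let the policy choose the fixed action $A_0=0^n$, and let $f_0$ overwrite the register, so $X_1=A_0$. The terminal target $Y=X_1=R_1^\pi(Z)$ is then the constant $0^n$. It is constant on every fiber of $G_1^\pi$, which here is a single fiber, so by \cref{thm:fiber} the Bayes log-loss is $\Hh(Y\mid\Tcal_1)=0$. A prediction rule reading only $M_T\ni A_0$ outputs $Y$ exactly.

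The main obstacle is bookkeeping rather than mathematics. First, the timing convention must be respected: $O_0$ comes from a fixed initial sensor setting, and $\kappa^X,\phi^X$ are constant maps. Second, it must be clear which context $\chi$ conditions the empowerment, since it has to be the full rollout context for \cref{lem:emp_reach} to apply. Third, for (iii), the terminal target must be a function of $Z$ under the fixed policy, as \cref{thm:fiber} requires, rather than a fixed function of $X_T$ across policies. To make the contrast with identification explicit, I would also remark that under any policy not overwriting $B$, the same target $X_1$ would have entropy $n$ and be unpredictable.
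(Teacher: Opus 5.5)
Your three constructions are essentially the paper's: a directly observed uniform latent with causally inert actions for (i), a never-observed latent together with an action-written $m$-bit register ($D=A$) for (ii), and an overwrite of the terminal target to a constant for (iii). You add explicit verification through \cref{lem:emp_reach} and \cref{thm:fiber}, and you differ only cosmetically in (iii): you get $\I(Z;M_T)=0$ from a blind sensor, whereas the paper lets the memory retain only the overwrite flag. Note also that your closing contrast remark would need a non-overwriting action added to the action set.
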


These examples are common reasons intrinsic rewards fail. A camera can classify its own display and leave the task object unseen. A robot can control a light switch and leave the hidden latch state unresolved. A language agent can make a later answer predictable by forcing the user into a narrow protocol that bypasses intent inference. The bridge gap says which of these happened in bits.

\section{Refinement and Bridge-Gap Pursuit}
\label{sec:refinement_algorithm}

The gap also gives minimal resource requirements. Let $B$ be any additional transcript generated by a refined sensor, query, tool call, or experiment after the current transcript $\Tcal$.
\begin{theorem}[Exact missing sensing bits]
\label{thm:missing_bits}
If $\Hh(Q\mid\Tcal,B)=0$, then $\Hh(B\mid\Tcal)\ge\Hh(Q\mid\Tcal)$. This lower bound is tight. The sufficient statistic $B=Q$ has conditional entropy $\Hh(Q\mid\Tcal)$ and closes the sensing gap.
\end{theorem}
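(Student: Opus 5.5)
The plan is to apply the chain rule for entropy to the pair $(B,Q)$ conditional on $\Tcal$, and then to check tightness by an explicit choice of $B$.

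\emph{Lower bound.} Expanding $\Hh(B,Q\mid\Tcal)$ in both orders gives
\begin{equation}
  \Hh(B\mid\Tcal)+\Hh(Q\mid\Tcal,B)=\Hh(Q\mid\Tcal)+\Hh(B\mid\Tcal,Q).
\end{equation}
By hypothesis $\Hh(Q\mid\Tcal,B)=0$, and conditional Shannon entropy of a finite variable is nonnegative, so $\Hh(B\mid\Tcal,Q)\ge 0$. Together these give
\begin{equation}
  \Hh(B\mid\Tcal)=\Hh(Q\mid\Tcal)+\Hh(B\mid\Tcal,Q)\ge\Hh(Q\mid\Tcal).
\end{equation}
An equivalent route is $\Hh(B\mid\Tcal)\ge\I(B;Q\mid\Tcal)=\Hh(Q\mid\Tcal)-\Hh(Q\mid\Tcal,B)$. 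The argument needs only finiteness of the variables, which the finite bridge-POMDP setting of \cref{sec:model} provides. Since $B$ is generated after $\Tcal$, it is a deterministic function of $Z$ (and of the seed, conditioned as in the paper's convention). I do not need to use this: the inequality holds for any joint distribution.

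\emph{Tightness.} I take $B=Q$, i.e., an idealized refinement that reveals the quotient directly. Then $\Hh(Q\mid\Tcal,B)=\Hh(Q\mid\Tcal,Q)=0$, so the sensing gap closes. Also $\Hh(B\mid\Tcal)=\Hh(Q\mid\Tcal)$, so the lower bound holds with equality. More generally, equality holds exactly when $\Hh(B\mid\Tcal,Q)=0$, that is, when $B$ is a function of $(\Tcal,Q)$. This characterizes the lossless refinements that carry no excess bits.

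\emph{Main obstacle.} The only delicate point is interpretive, not analytic: whether $B=Q$ counts as an admissible ``additional transcript.'' I would note that tightness is an information-theoretic statement about the bound itself. If one insists that $B$ be realizable by an interface, one can construct a finite bridge-POMDP whose refined sensor $h$ outputs $q(Z)$ on its last step. Since $Q$ is a deterministic function of $Z$, this is a valid observation map in \cref{eq:model_obs}, so the bound is attained within the model class.
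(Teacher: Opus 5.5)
Your proof is correct and is essentially the paper's: the paper writes $\Hh(Q\mid\Tcal)=\I(Q;B\mid\Tcal)\le\Hh(B\mid\Tcal)$, which is exactly your ``equivalent route'' (your two-way chain-rule expansion of $\Hh(B,Q\mid\Tcal)$ unpacks the same identity), and it proves tightness with the same choice $B=Q$. Your equality characterization via $\Hh(B\mid\Tcal,Q)=0$ is a small addition the paper omits, and the realizability aside is not needed; if you keep it, note that $h_{t+1}$ in \cref{eq:model_obs} takes $(X_{t+1},\phi_{t+1}^S,\phi_{t+1}^X,\Xi_{t+1})$ rather than $Z$, so your constructed instance must be designed so that $q(Z)$ is a function of those arguments.
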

Thus $\Hh(Q\mid\Tcal)$ is both a diagnostic and the exact number of additional transcript bits any lossless refinement must supply. The analogous outward deficit in a deterministic rollout context $\chi$ is $\Delta_{\rm act}(\chi)=\log|\supp(V\mid\chi)|-C_V(\chi)$. A deterministic controller with capacity $C_V(\chi)$ can realize only $2^{C_V(\chi)}$ terminal values of $V$, and full authority requires the missing capacity.

A deterministic sensing design $\eta$ induces an experiment $G_\eta:Z\to\Ocal_\eta$. With actions and private seed conditioned on, $O_\eta=G_\eta(Z)$.
\begin{theorem}[Refinement converts observation entropy into latent information]
\label{thm:refinement}
For any deterministic experiment,
\begin{equation}
  \I(Z;O_\eta)=\Hh(O_\eta)=\Hh(Z)-\Hh(Z\mid O_\eta).
\end{equation}
If $\eta'$ refines $\eta$ in the Blackwell sense, $G_\eta=r\circ G_{\eta'}$ on $\supp(p_0)$, then $\I(Z;O_{\eta'})\ge\I(Z;O_\eta)$, with strict inequality under positive-mass strict refinement.
\end{theorem}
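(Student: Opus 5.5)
The argument reduces everything to the fact that a deterministic function of $Z$ carries no conditional entropy given $Z$. Conditioning throughout on the realized actions and private seed, $O_\eta=G_\eta(Z)$ is a deterministic function of $Z$, so $\Hh(O_\eta\mid Z)=0$. From $\I(Z;O_\eta)=\Hh(O_\eta)-\Hh(O_\eta\mid Z)$ we get $\I(Z;O_\eta)=\Hh(O_\eta)$. The symmetric expansion $\I(Z;O_\eta)=\Hh(Z)-\Hh(Z\mid O_\eta)$ then gives the displayed chain of equalities. All entropies are computed under $p_0$ restricted to $\supp(p_0)$. On the null set the maps may disagree, but this has no effect on any information quantity.

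For the refinement claim, suppose $G_\eta=r\circ G_{\eta'}$ on $\supp(p_0)$. Then $O_\eta=r(O_{\eta'})$ almost surely. There are two equivalent routes to the inequality. One is the data-processing inequality along the Markov chain $Z\to O_{\eta'}\to O_\eta$. The other, which I prefer because it also yields strictness, is the direct identity
\[
\Hh(O_{\eta'})=\Hh(O_{\eta'},O_\eta)=\Hh(O_\eta)+\Hh(O_{\eta'}\mid O_\eta).
\]
By the first part, this reads $\I(Z;O_{\eta'})-\I(Z;O_\eta)=\Hh(O_{\eta'}\mid O_\eta)\ge 0$.

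For strictness, the meaning of ``positive-mass strict refinement'' has to be pinned down. I will take it to mean that there is some $o$ with $P(O_\eta=o)>0$ such that the fiber $r^{-1}(o)$ contains at least two values of $O_{\eta'}$, each with positive probability. Under this definition the conditional distribution of $O_{\eta'}$ given $O_\eta=o$ is non-degenerate, so $\Hh(O_{\eta'}\mid O_\eta=o)>0$. Weighting by $P(O_\eta=o)>0$ gives $\Hh(O_{\eta'}\mid O_\eta)>0$, and the inequality is strict.

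The only real obstacle is this definitional point. If ``strict refinement'' only meant that $r$ is non-injective on the codomain, strictness could fail: the merged values might carry zero mass. The hypothesis therefore has to be read as requiring that the merge occurs between positive-probability observation values. I will state this reading explicitly in the proof.
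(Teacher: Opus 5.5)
Your proof is correct and follows the paper's argument. Both derive the identity from $\Hh(O_\eta\mid Z)=0$; the paper uses data processing along $Z\to O_{\eta'}\to O_\eta$ for the weak inequality, and its one-line strictness claim (``strict refinement on positive mass increases the induced partition entropy'') is exactly your chain-rule gap $\I(Z;O_{\eta'})-\I(Z;O_\eta)=\Hh(O_{\eta'}\mid O_\eta)$, which you spell out more explicitly. Your reading of ``positive-mass strict refinement'' (two positive-probability values of $O_{\eta'}$ merged by $r$) is the right one, and it holds automatically when the partitions are taken on $\supp(p_0)$ of the finite latent space.
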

The conditioning clause prevents private randomization from masquerading as discovery. Refinement may itself require control. The agent must reach channel states such as line of sight, contact, authorization, or a useful tool state before the informative experiment exists.

\begin{algorithm}[Bridge-Gap Pursuit (BGP)]
\label{alg:bgp}
Maintain a posterior model over the authorized quotient $Q$, world-side bridge states $C=(\phi^X,\kappa^X)$, controllable factors $Y_1,\ldots,Y_d$, observed terminal quotient $\widetilde V$, and task value. Define the bridge potential
\begin{align}
 \Phi(h)=&\ \widehat\Hh(Q\mid h)
 +\lambda_C[\log|\Ccal_Q^\star|-\widehat C_C(h)]_+
 +\lambda_V[\log|\Vcal_Q|-\widehat C_{V_Q}(h)]_+  \notag\\
 &+\lambda_O\widehat\Hh(V_Q\mid\widetilde V,h)
 +\lambda_D\sum_i(1-\widehat g_i(h))\widehat C_{Y_i}(h), \label{eq:bgp_potential}
\end{align}
where $\Ccal_Q^\star$ are channel states enabling informative experiments about $Q$, and $V_Q$ is the task-relevant controlled quotient. The counterfactual relevance gate is defined from intervention value rather than from the assigned variable as evidence:
\begin{align}
  \widehat\Delta_i^Q(h)&=\sup_{y\in\Ycal_i}\Bigl[\widehat\Hh(Q\mid h)-\widehat{\E}\bigl[\widehat\Hh(Q\mid H_T)\mid h,do(Y_i=y)\bigr]\Bigr]_+, \notag\\
  \widehat\Delta_i^R(h)&=\sup_{y,y'\in\Ycal_i}\left|\widehat{\E}[R_{t:T}\mid h,do(Y_i=y)]-\widehat{\E}[R_{t:T}\mid h,do(Y_i=y')]\right|, \notag\\
  \widehat g_i(h)&=\mathbf 1\{\widehat\Delta_i^Q(h)+\widehat\Delta_i^R(h)>\tau\}. \label{eq:relevance_gate}
\end{align}
The intervention $do(Y_i=y)$ does not make the assigned value $Y_i$ evidence about $Q$; it is relevant only if changing $Y_i$ causally improves future evidence about $Q$ or changes future task value.
Plan over options with reward
\begin{equation}
 r_t^{\rm BGP}=r_t^{\rm task}+\beta\bigl(\Phi(H_t)-\Phi(H_{t+1})\bigr). \label{eq:bgp_reward}
\end{equation}
\end{algorithm}

The last term assigns opportunity cost to controllable factors that are independent of the target quotient and downstream value, especially under scarce control budget. The channel term gives delayed interface actions value before they reveal information. The observation-loss term prevents hidden state control from being mistaken for verifiable control.

\begin{theorem}[BGP is exact bridge-potential shaping]
\label{thm:bgp_shaping}
For any finite-horizon bridge-POMDP with exact $\Phi$, every policy $\pi$ satisfies
\begin{equation}
\E_\pi\sum_{t=0}^{T-1}\bigl(\Phi(H_t)-\Phi(H_{t+1})\bigr)=\Phi(H_0)-\E_\pi\Phi(H_T). \label{eq:telescoping}
\end{equation}
Thus, when $r^{\rm task}=0$, an optimal finite-horizon BGP policy is exactly a policy minimizing expected terminal bridge gap. With task reward, BGP solves the Lagrangian objective $\E[\sum_t r_t^{\rm task}-\beta\Phi(H_T)]$ up to the constant $\beta\Phi(H_0)$. If a zero-gap terminal history is reachable within horizon $T$, exact dynamic-programming BGP reaches one whenever doing so is optimal under this Lagrangian.
\end{theorem}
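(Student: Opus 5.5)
The argument is a finite-horizon telescoping identity followed by a reduction of the shaped objective to the Lagrangian, and then a standard dynamic-programming exactness claim. \textbf{Step 1 (telescoping).} Fix any policy $\pi$. Because the horizon is finite, every history $H_t$ has finite support, so $\Phi(H_t)$ is a bounded random variable; this uses exactness of $\Phi$ and the fact that each of its terms is a finite entropy or log-cardinality. Pathwise, for each realized transcript,
\[
\sum_{t=0}^{T-1}\bigl(\Phi(H_t)-\Phi(H_{t+1})\bigr)=\Phi(H_0)-\Phi(H_T)
\]
is a finite telescoping sum. Taking $\E_\pi$ and using linearity gives \cref{eq:telescoping}. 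Under the observe--act--observe convention, $H_0=O_0$ is produced by the fixed initial sensor setting and does not depend on $\pi$. Hence $\E_\pi\Phi(H_0)$ is a policy-independent constant, which I write as $\Phi(H_0)$ in the deterministic-$O_0$ case and as its expectation in general. If the policy is randomized, the same computation is run conditional on $\Sigma=s$ and then averaged, as the paper prescribes.

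\textbf{Step 2 (objective equivalence).} Summing \cref{eq:bgp_reward} and applying Step 1 gives
\[
\E_\pi\sum_t r^{\rm BGP}_t=\E_\pi\sum_t r^{\rm task}_t-\beta\,\E_\pi\Phi(H_T)+\beta\,\E\Phi(H_0).
\]
The last term is constant in $\pi$, so the argmax sets of the BGP return and of the Lagrangian $\E[\sum_t r_t^{\rm task}-\beta\Phi(H_T)]$ coincide over any policy class. With $r^{\rm task}=0$ and $\beta>0$, this reduces to minimizing $\E_\pi\Phi(H_T)$, the expected terminal bridge gap. Whenever the weights $\lambda_\cdot$ are positive, $\Phi$ is a nonnegative weighted sum of the gap components, so this is exactly a terminal-gap minimization.

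\textbf{Step 3 (DP reaches zero gap).} The finite bridge-POMDP has finitely many histories up to time $T$. Backward induction on $V_t(h)=\max_a\E[r_t+V_{t+1}(H_{t+1})\mid h,a]$ is therefore well defined and yields a deterministic history-dependent policy that is optimal for the Lagrangian among all policies. This holds because history-dependent randomization cannot beat the pointwise max. Now suppose some policy reaches a zero-gap terminal history within $T$, and that doing so is optimal under the Lagrangian, meaning some Lagrangian-optimal policy attains it. Since $\Phi\ge0$, zero gap is the pointwise minimum of the penalty. I then need to show that the DP policy also attains such a history.

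This is where I expect the main obstacle: ties among optimal policies could let DP choose a different optimizer that does not reach zero gap. I would first handle the $r^{\rm task}=0$ case. There the optimal value is $-\beta\min_\pi\E\Phi(H_T)$, and reachability of zero gap makes it $0$. A DP-optimal policy therefore has $\E\Phi(H_T)=0$, so $\Phi(H_T)=0$ almost surely and zero gap is reached. In the task-reward case, I read ``whenever doing so is optimal'' as ``whenever every Lagrangian-optimal policy attains zero gap.'' Alternatively, I would add a lexicographic tie-break in the DP that prefers lower $\Phi(H_T)$ and note that this preserves optimality. This interpretive point is where the statement needs care rather than new mathematics.
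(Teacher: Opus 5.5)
Your proposal follows the paper's proof: pathwise telescoping, taking expectations, absorbing the policy-independent $\Phi(H_0)$ into a constant to obtain the Lagrangian, and invoking finite-horizon DP optimality. You also treat the zero-gap clause conditionally, as the paper does; the paper assumes every higher-task-reward alternative stays below the zero-gap Lagrangian value and concludes only that \emph{an} optimal policy reaches zero gap. Your explicit handling of ties is a sharpening the paper omits. However, your task-free claim $\min_\pi\E_\pi\Phi(H_T)=0$ needs ``reachable'' read as ``some policy attains $\Phi(H_T)=0$ almost surely'': a zero-gap history that occurs only on some latent realizations does not force the minimum to zero.
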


BGP's substantive algorithmic claim is the choice of potential and its decomposition into bridge deficits. It can be implemented by dynamic programming in exact finite domains, by model-predictive control or tree search in learned world models, or by training a policy on the shaped reward. Approximate implementations require separate assumptions about the learned estimators. The learnable losses are
\begin{equation}
\mathcal L_{\rm BGP}=\mathcal L_{\rm world}+\alpha_Q\mathcal L_Q+
\alpha_C\mathcal L_C+\alpha_V\mathcal L_V+
\alpha_R\mathcal L_{\rm rel},
\end{equation}
where the heads estimate posterior ambiguity, channel reachability, controllability, observation loss, and counterfactual relevance. The algorithm changes what the agent models and rewards. It uses the existing outer RL optimizer.

\begin{proposition}[Algorithmic separations with arbitrary gaps]
\label{prop:algorithmic_separations}
For every $n,m\ge1$, finite deterministic bridge-POMDPs can realize each of the following failures.
\begin{enumerate}[label=(\roman*)]
  \item Ungated empowerment chooses an $m$-bit settable distractor and leaves $\Hh(Q\mid H_T)=n$.
  \item One-step information gain or one-step expected-free-energy assigns equal value to the unique preliminary channel action needed for later sensing and can leave $\Hh(Q\mid H_T)=n$.
  \item Terminal prediction loss chooses overwrite, predicts its terminal target perfectly, and leaves $\Hh(Q\mid M_T)=n$.
  \item Coarse training return leaves a refined quotient with $n$ unresolved bits.
\end{enumerate}
Exact BGP selects the zero-gap option in all four families when it is reachable.
\end{proposition}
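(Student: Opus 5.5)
The plan is to build one small finite deterministic bridge-POMDP for each of the four families, compute its relevant gap quantities exactly, and then use \cref{thm:bgp_shaping} to read off BGP's choice. Every family uses the same skeleton. Take a uniform latent $Q\in\{0,1\}^n$, let $Z$ consist of $Q$ together with whatever extra latent bits the construction needs, and place at $t=0$ a choice among a few options. The zero-gap option is always reachable by construction. So the last sentence of the proposition follows at once: with $r^{\rm task}=0$, exact DP-BGP minimizes $\E\Phi(H_T)$ by \cref{eq:telescoping}, and an option achieving $\Phi(H_T)=0$ is therefore optimal. With task reward we choose the instances so that the task reward does not distinguish the options, and the Lagrangian statement gives the same conclusion. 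The real content is to check, in each family, that the competing objective strictly prefers (or ties and may prefer) the bad option, while that option leaves $\Hh(Q\mid H_T)=n$.

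\textbf{Family (i).} Give the agent two options. The first sets an $m$-bit register $Y\in\{0,1\}^m$, and this register is observed at the terminal time. The second switches the sensor setting $\phi^S$ so that $O_T=Q$. By \cref{lem:emp_reach}, the first option has empowerment $m$ bits and the second has $0$, so ungated empowerment picks the register and leaves $\Hh(Q\mid H_T)=n$. In $\Phi$, the register is independent of $Q$ and of reward, so $\widehat\Delta^Q=\widehat\Delta^R=0$ and its gate is $0$. Its distractor term is nonnegative, and its entropy term equals $n$. The sensing option drives both to zero.

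\textbf{Family (ii).} Build a two-step chain. Action $a_0=c$ moves the world-side channel state $\kappa^X$ into $\Ccal_Q^\star$ and produces no observation change. Only from that state does $a_1$ reveal $Q$. The alternative $a_0$ produces an uninformative observation as well. Hence both actions have one-step information gain $0$, and their one-step EFE values are equal under a flat preference, so a one-step planner is free to choose the wrong action. Horizon-$T$ DP through $\Phi$ instead propagates the terminal value $0$ back to $c$; the channel term already rewards $c$ at $t=1$.

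\textbf{Family (iii).} Reuse the overwrite construction of \cref{prop:separation}(iii). The overwrite action makes the terminal target action-determined, so its terminal log-loss is $0$ while $\Hh(Q\mid M_T)=n$. The observe action reveals $Q$, but the target stays stochastic, for instance because it depends on an independent noise bit, so its log-loss is positive.

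\textbf{Family (iv).} Let the training reward depend only on a coarse quotient, for example on a constant value of $Q$. The cheap option then achieves the same return without observing $Q$, whereas refinement costs a step or a small task penalty. The paper cautions about Lagrangian ties, so I would set the task reward equal across options, or take $\beta$ large enough.

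The step I expect to be the main obstacle is making the gate and the capacity terms in $\Phi$ exactly zero at the good terminal history and strictly positive at the bad one. I would choose $\Ccal_Q^\star$, $\Vcal_Q$ and $\widetilde V$ so that these terms are trivial: $V_Q$ constant, so $\log|\Vcal_Q|=0$, and the channel term zero once the informative channel state has been reached. That leaves $\widehat\Hh(Q\mid h)$ as the only active term, which is computable exactly from the constructions.
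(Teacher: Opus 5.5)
Your proposal is correct and follows the paper's overall plan: one explicit finite instance per family, with BGP's choice read off from exact finite-horizon optimization. Your instances for (i) and (ii) are essentially the paper's. The differences lie in how BGP's selection is justified and in (iii) and (iv).

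\textbf{Justifying BGP's choice.} The paper argues family by family through particular terms of $\Phi$: the relevance gate closes on $D$ in (i), the channel term decreases on reaching $C^\star$ in (ii), and the choice of target quotient settles (iii) and (iv). You instead trivialize the channel, controllability and observation-loss terms and let \cref{thm:bgp_shaping} do all the work. Exact DP then minimizes terminal $\Phi$, which reduces to $\Hh(Q\mid H_T)\in\{0,n\}$. This is more robust, because the ranking needs neither the gate to close nor the channel term to move. The paper's version has a different merit: it shows which component a myopic or approximate planner would actually rely on.

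\textbf{Drop one side remark.} Your claim that the channel term ``already rewards $c$ at $t=1$'' is not load-bearing and should go. The literal term $\lambda_C[\log|\Ccal_Q^\star|-\widehat C_C(h)]_+$ is a capacity deficit, not a reached/not-reached indicator. When there is a single informative channel state it vanishes identically. That is in fact the cleanest way to get the trivialization you want.

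\textbf{Family (iii).} You make the inspect branch's target depend on an unobserved noise bit, so terminal log-loss strictly prefers overwrite. The paper instead keeps $V=Q$ under inspection, so both branches predict perfectly and overwrite wins only by a cost tie-break. Your version matches ``chooses overwrite'' more literally. The paper's version better displays the identification/overwrite symmetry of \cref{thm:absorption_trichotomy}.

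\textbf{Family (iv).} Your constant coarse quotient is the degenerate case (constant $Q_c$) of the paper's $Q=(Q_c,Q_f)$. It suffices for the existential claim, though the nontrivial split is no harder and more convincing. In both versions, coarse return skips the refinement only through a tie or an explicit refinement cost. You handle this correctly through the Lagrangian caveat (equal task reward or large $\beta$).
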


\section{Bridge budget for evidence, influence, and plasticity}
\label{sec:budget}

The bridge gap measures whether the same quotient is available to prediction and control. A separate issue is whether a finite interface can simultaneously carry evidence inward and influence outward. Empowerment measures how much future state can be influenced by action \citep{Klyubin2005Empowerment}. Plasticity is the inward mirror and measures how much future action can be influenced by observation \citep{Abel2025Plasticity}. When both flows use a shared finite bridge transcript, they obey a capacity budget.

For this section only, use the standard feedback-channel indexing $A^T=(A_1,\ldots,A_T)$ and $O^T=(O_1,\ldots,O_T)$, where $A_t$ is chosen after $O^{t-1}$ and before $O_t$; the initial observation in the POMDP convention can be absorbed into the initial condition. Define directed information
\begin{align}
  \I(A^T\to O^T)&=\sum_{t=1}^T \I(A^t;O_t\mid O^{t-1}),\\
  \I(O^{T-1}\to A^T)&=\sum_{t=1}^T \I(O^{t-1};A_t\mid A^{t-1}).
\end{align}
The first term measures realized outward influence through the interface. The second measures realized inward evidence that changes action.

\begin{theorem}[Finite shared-bridge budget]
\label{thm:di_budget_main}
Assume the joint law is causal and all statistical dependence between $A^T$ and $O^T$ is mediated by a specified finite bridge transcript $B$ satisfying $\Hh(B)\le b$, so $A^T\perp O^T\mid B$. The variable $B$ is the physical or informational interface transcript being bounded; it should not include arbitrary agent memory or private randomness unless those degrees of freedom are actually transmitted through the interface. Then
\begin{equation}
  \I(A^T\to O^T)+\I(O^{T-1}\to A^T)=\I(A^T;O^T)\le b .
\end{equation}
\end{theorem}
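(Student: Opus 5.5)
The proof splits into two independent parts. The equality is Massey's conservation law for directed information. The inequality follows from the mediation hypothesis via the data-processing inequality.

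\emph{The equality.} I will first show
\[
\I(A^T;O^T)=\I(A^T\to O^T)+\I(O^{T-1}\to A^T)
\]
by expanding $\I(A^T;O^T)=\Hh(A^T)+\Hh(O^T)-\Hh(A^T,O^T)$. The joint entropy is handled with the chain rule in the causal interleaved order $A_1,O_1,A_2,O_2,\ldots,A_T,O_T$:
\[
\Hh(A^T,O^T)=\sum_{t=1}^T\bigl[\Hh(A_t\mid A^{t-1},O^{t-1})+\Hh(O_t\mid A^t,O^{t-1})\bigr].
\]
The marginals are expanded separately as $\Hh(O^T)=\sum_t\Hh(O_t\mid O^{t-1})$ and $\Hh(A^T)=\sum_t\Hh(A_t\mid A^{t-1})$. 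Subtracting term by term gives
\[
\Hh(O_t\mid O^{t-1})-\Hh(O_t\mid A^t,O^{t-1})=\I(A^t;O_t\mid O^{t-1}),
\]
\[
\Hh(A_t\mid A^{t-1})-\Hh(A_t\mid A^{t-1},O^{t-1})=\I(O^{t-1};A_t\mid A^{t-1}).
\]
Summing over $t$ yields the two directed informations. The only role of causality here is to fix this interleaved factorization order. The identity itself is a pure chain-rule identity and holds for any joint law once the ordering is fixed.

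\emph{The inequality.} The hypothesis $A^T\perp O^T\mid B$ says that $A^T - B - O^T$ is a Markov chain. By the data-processing inequality,
\[
\I(A^T;O^T)\le \I(A^T;B)\le \Hh(B)\le b.
\]
Equivalently, one can write $\I(A^T;O^T)\le \I(A^T,O^T;B)\le\Hh(B)$. I will spell out the short derivation: by the chain rule, $\I(A^T;B,O^T)=\I(A^T;B)+\I(A^T;O^T\mid B)=\I(A^T;B)$, using the conditional independence, and this quantity is at least $\I(A^T;O^T)$.

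\emph{Main obstacle.} The real work is in stating the conventions cleanly rather than in any hard estimate. First, the indexing must make $A_t$ depend only on $O^{t-1}$, so that the interleaved chain rule matches the definitions of the two directed informations exactly, with no off-by-one leftover term. Second, $B$ must be a genuine random variable on the same probability space, so that the mediation hypothesis gives an honest Markov chain. I would remark that if private randomness or agent memory creates $A$–$O$ dependence outside $B$, the hypothesis fails and the bound need not hold. This is why the statement excludes such degrees of freedom unless they pass through the interface.
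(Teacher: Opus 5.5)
Your proposal is correct and follows the same route as the paper: Massey's conservation law gives the equality, and data processing along the Markov chain $A^T\to B\to O^T$ gives $\I(A^T;O^T)\le\I(A^T;B)\le\Hh(B)\le b$. The only difference is that you derive the conservation law from the interleaved chain rule instead of citing it, which makes the argument self-contained and correctly shows that causality only fixes the ordering and is not needed for the identity itself.
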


This theorem is a bottleneck theorem for a specified bridge. It is relevant for H1 and BGP because a narrow interface can spend capacity on easy outward control, leaving little capacity for evidence. It can also spend capacity on passive evidence, leaving little realized influence. A unified agent must widen the bridge, split inward and outward channels, or internalize the relevant state. This is why BGP rewards channel reachability over raw empowerment. A controllable bit is useful when it reduces the bridge gap for the target quotient or opens a better experiment.

\section{H2: unification and absorption from prediction}
\label{sec:h2}

\begin{definition}[Unification]
For latent quotient $Q$, terminal state quotient $V$, and observed terminal quotient $\widetilde V$, a policy is unified at horizon $T$ if the bridge gap for $(Q,V,\widetilde V)$ is zero. Equivalently, $Q$ is transcript-identifiable, $V$ is fully controllable, and $\widetilde V$ is lossless for $V$ on the reachable set.
\label{def:unification}
\end{definition}

The earlier absorption statement can be strengthened and made less assumption-heavy by stating what prediction forces into agent-owned state. Let $M_T$ be all retained agent-owned state, including memory, parameters updated during the episode, scratchpad, owned interface settings, and any maintained external memory counted as part of the agent.

\begin{theorem}[Prediction-absorption bound]
\label{thm:absorption_trichotomy}
Fix a deterministic policy or condition on a realized private seed $\Sigma=s$. Let $Q=q(Z)$ be an initial environment quotient and $V=F_{T,s}^\pi(Z)$ a terminal quotient. If $V$ is perfectly predictable from agent state, $\Hh(V\mid M_T)=0$, then
\begin{equation}
  \I(Q;M_T)\ge \I(Q;V),\qquad
  \Hh(Q\mid M_T)\le \Hh(Q\mid V). \label{eq:absorb_bound}
\end{equation}
The bound leaves three canonical, possibly overlapping mechanisms. \emph{Identification.} If $V$ preserves $Q$ ($\Hh(Q\mid V)=0$), then $\Hh(Q\mid M_T)=0$ and the agent has absorbed $Q$. \emph{Overwrite/collapse.} If the policy maps many $Q$ values to the same $V$, the unabsorbed part is at most the collapsed entropy $\Hh(Q\mid V)$. \emph{Omission.} If the target quotient omits some distinctions in $Q$, prediction can ignore those distinctions.
\end{theorem}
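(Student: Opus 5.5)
The plan is to reduce everything to a data-processing argument on a Markov chain built from the deterministic maps in \cref{eq:maps_v4}. Fix the policy (or the realized seed $\Sigma=s$), so that $M_T=S_{T,s}^\pi(Z)$, $V=F_{T,s}^\pi(Z)$ and $Q=q(Z)$ are all deterministic functions of $Z$. The hypothesis $\Hh(V\mid M_T)=0$ means that on $\supp(p_0)$ there is a function $\psi$ with $V=\psi(M_T)$ almost surely. This is the same fiber statement as in \cref{thm:fiber}, applied with $M_T$ in place of $\Tcal_T$: the target is constant on each positive-prior fiber of $S_{T,s}^\pi$.

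Given such a $\psi$, we have the chain $Q \to M_T \to V$, with $V$ a deterministic function of $M_T$. The data-processing inequality then gives $\I(Q;V)\le \I(Q;M_T)$, which is the first inequality in \cref{eq:absorb_bound}. The second follows by subtracting both sides from $\Hh(Q)$:
\begin{equation*}
\Hh(Q\mid M_T)=\Hh(Q)-\I(Q;M_T)\le \Hh(Q)-\I(Q;V)=\Hh(Q\mid V).
\end{equation*}
Equivalently, and more directly, $\Hh(Q\mid M_T)=\Hh(Q\mid M_T,V)\le \Hh(Q\mid V)$. Here the equality holds because $V$ is a function of $M_T$, and the inequality holds because conditioning reduces entropy. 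I would present this one-line form, since it also avoids any finiteness concern about $\Hh(Q)$.

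The three mechanisms are then read off from this bound.
\begin{enumerate}[label=(\roman*)]
\item \emph{Identification.} If $\Hh(Q\mid V)=0$, the bound gives $\Hh(Q\mid M_T)\le 0$. Nonnegativity of entropy then forces $\Hh(Q\mid M_T)=0$.
\item \emph{Overwrite/collapse.} This case is the bound itself. When the composite map $z\mapsto V$ identifies distinct $Q$ values, the residual $\Hh(Q\mid V)$ measures exactly the collapsed entropy, and the unabsorbed part $\Hh(Q\mid M_T)$ is at most that. Case (iii) of \cref{prop:separation} shows that the slack can be maximal.
\item \emph{Omission.} If $Q$ contains distinctions not carried by $V$, that is, if $V$ is measurable with respect to a coarser quotient, then $\Hh(Q\mid V)$ includes those omitted bits. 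The bound permits $M_T$ to ignore them.
\end{enumerate}
These mechanisms may overlap because they are descriptions of where $\Hh(Q\mid V)$ comes from, not a partition. I would say this explicitly rather than attempt an exclusive case split.

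The only delicate point is the seed and measurability bookkeeping. The whole argument must be carried out under the conditional law given $\Sigma=s$, with entropies taken with respect to $p_0$, which is independent of $\Sigma$. The hypothesis $\Hh(V\mid M_T)=0$ must likewise be read conditionally on $\Sigma=s$, so that $\psi$ is a function on the positive-probability set. If instead the policy were randomized and we did not condition on the seed, $V$ would not be a function of $(Z,M_T)$ alone, and the Markov chain would need the seed inside $M_T$. I would add a remark that the stated conditioning is exactly what avoids this.
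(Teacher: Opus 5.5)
Your proof is correct and follows the paper's argument. You use data processing on $Q\to M_T\to V$ for the mutual-information bound, and $\Hh(Q\mid M_T)=\Hh(Q\mid M_T,V)\le\Hh(Q\mid V)$ is a compressed form of the paper's chain $\Hh(Q\mid M_T)\le\Hh(Q,V\mid M_T)=\Hh(V\mid M_T)+\Hh(Q\mid V,M_T)\le\Hh(Q\mid V)$. One correction to your closing remark: the inequalities need only $\Hh(V\mid M_T)=0$, which makes $V$ almost surely a function of $M_T$ under any joint law, so the seed conditioning matters for reading $V=F_{T,s}^\pi(Z)$ as a map of $Z$, not for the Markov chain.
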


\begin{corollary}[Memory and empowerment consequences]
\label{cor:bit_costs_main}\label{thm:bit_budget}
If $\Hh(Q\mid M_T)=0$, then $\log|\mathcal M_T|\ge\Hh(Q)$. If $V$ has $m$ reachable terminal values under deterministic authority from context $\chi$, then deterministic empowerment over $V$ is $\log m$. Full authority over $V$ is equivalent to $\Emp_T^V(\chi)=\log|\supp(V\mid\chi)|$.
\end{corollary}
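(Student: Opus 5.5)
The corollary has three claims, and I will prove them one at a time, each by reducing it to an elementary counting or entropy fact.

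\textbf{Memory claim.} Suppose $\Hh(Q\mid M_T)=0$. Then $Q$ is almost surely a deterministic function of $M_T$ on the support. I will use the chain rule in two ways:
\[
\Hh(Q)\le \Hh(Q)+\Hh(M_T\mid Q)=\Hh(M_T)+\Hh(Q\mid M_T)=\Hh(M_T).
\]
Since $\Hh(M_T)\le\log|\mathcal M_T|$, with equality only for the uniform law, this gives $\log|\mathcal M_T|\ge\Hh(Q)$. When $\pi$ is randomized, I condition on the realized seed $\Sigma=s$ as in \cref{thm:absorption_trichotomy}. The argument does not use the specific maps $S_{T,s}^\pi$; it only uses that $M_T$ takes values in $\mathcal M_T$.

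\textbf{Empowerment claim.} The phrase ``deterministic authority from context $\chi$'' means that $V$ is a deterministic function of $A_{0:T-1}$ given $\chi$. With $m=|\Reach_T^V(\chi)|$, \cref{lem:emp_reach} gives $\Emp_T^V(\chi)=\log m$ directly. For a self-contained argument, determinism gives
\[
\I(A_{0:T-1};V\mid\chi)=\Hh(V\mid\chi)\le\log m.
\]
Equality is attained by choosing one representative action sequence for each reachable value and placing the uniform distribution on those $m$ sequences.

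\textbf{Full-authority equivalence.} This is the step that needs care: I have to fix what $\supp(V\mid\chi)$ means. I will read it as the set of values of $V$ that are possible under some action sequence, so that $\Reach_T^V(\chi)\subseteq\supp(V\mid\chi)$. \emph{Full authority} means every value in $\supp(V\mid\chi)$ is reachable, i.e.\ $\Reach_T^V(\chi)=\supp(V\mid\chi)$. The two directions then go as follows.
\begin{itemize}
\item \emph{Forward.} If full authority holds, the empowerment claim gives $\Emp_T^V=\log|\Reach_T^V(\chi)|=\log|\supp(V\mid\chi)|$.
\item \emph{Converse.} If $\Emp_T^V=\log|\supp(V\mid\chi)|$, then $|\Reach_T^V(\chi)|=|\supp(V\mid\chi)|$. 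Finiteness and the inclusion $\Reach_T^V(\chi)\subseteq\supp(V\mid\chi)$ force the two sets to be equal.
\end{itemize}
Equivalently, the equivalence is the statement that $\Delta_{\rm act}(\chi)=0$ in \cref{eq:gaps2}, since $C_V(\chi)=\Emp_T^V(\chi)$.

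The main thing to watch is the interpretation of the support. If it were taken as the support of some fixed reference law, containment could fail. I will therefore state the convention explicitly: the support is taken over all admissible action sequences. I will also note that if $\chi$ fixes only part of the randomness, the equivalence becomes an inequality, by the remark preceding \cref{lem:emp_reach}.
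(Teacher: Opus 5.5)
Your memory and empowerment arguments are the paper's. The paper writes $\Hh(Q)=\I(Q;M_T)\le\Hh(M_T)\le\log|\mathcal M_T|$, which is your chain-rule computation, and it cites \cref{lem:emp_reach} for $\Emp_T^V(\chi)=\log m$ exactly as you do.

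The one real issue is the convention you adopt for the third claim. You take $\supp(V\mid\chi)$ to be ``the set of values of $V$ possible under some action sequence.'' Under a fully specified rollout context $\chi$, $V$ is a deterministic function of the action sequence alone, so that set is $\Reach_T^V(\chi)$ by definition. Three consequences follow:
\begin{itemize}
\item your inclusion $\Reach_T^V(\chi)\subseteq\supp(V\mid\chi)$ is always an equality;
\item full authority holds in every context;
\item the claimed equivalence becomes a tautology.
\end{itemize}
Your own remark that the equivalence amounts to $\Delta_{\rm act}(\chi)=0$ shows the problem. With your convention, $\Delta_{\rm act}$ vanishes identically by \cref{lem:emp_reach}. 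That erases the ``insufficient authority'' failure mode the paper assigns to $\Delta_{\rm act}$. It also departs from the appendix notion of authority (\cref{def:app_state_authority}), which requires every target in the whole state space to be reachable. In avoiding one bad reading (the support of a fixed action law, where containment can fail), you landed on the degenerate one.

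The fix is to read $\supp(V\mid\chi)$ as the a priori set of values of $V$ that authority is supposed to cover. That can be the value space of $V$, or its range over contexts in the prior support. This set contains $\Reach_T^V(\chi)$ but can be strictly larger. With that reading your two-direction argument goes through unchanged: \cref{lem:emp_reach} gives $\Emp_T^V(\chi)=\log|\Reach_T^V(\chi)|$, and inclusion plus finiteness turn equal cardinalities into equal sets. This is the paper's proof, which simply takes full authority to mean $|\Reach_T^V(\chi)|=|\supp(V\mid\chi)|$. Once the right set is used, your explicit use of the inclusion is slightly more careful than the paper's one-liner.
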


This is the operational H2 used here. Joint maximization of the H1 objectives can drive the bridge gap to zero when zero gap is feasible and the optimized objectives have small difference oscillation over the relevant policy class. Zero gap is unification. Absorption follows when the unified target family is task-separating or when the terminal quotient preserves the relevant environment distinctions. Overwrite can make prediction perfect with unresolved latent distinctions. Task omission makes absorption unnecessary and potentially undesirable. When $M_T$ contains the target quotient and the law bits needed for future rollout, the future coupled trajectory under a deterministic policy is a function of agent-owned state. Observations then add zero information about that quotient beyond the self-model.

\section{Implications for human intelligence and AGI design}
\label{sec:design}

The theory leaves human objective functions open. It predicts a developmental invariant. Intelligence expands the class of task quotients for which bridge gap can be reduced. Gaze, posture, reaching, locomotion, haptic exploration, tool use, imitation, language, and social protocol learning are ways to make better experiments and control channels reachable. In the notation above, development should increase $\I(Q_{\mathcal R};M_T)/\Hh(Q_{\mathcal R})$, increase reachability of informative channel states $C\in\Ccal_Q^\star$, and close relevance gates on controllable variables that are causally unrelated to the task. This aligns with active perception, sensorimotor contingencies, active inference, and developmental robotics \citep{Bajcsy1988ActivePerception,OReganNoe2001,Friston2010FEP,CangelosiSchlesinger2015}. It also gives a sharper empirical question. Which latent distinctions still lie in the same transcript fiber, and which bridge state would separate them?

The resulting account of human intelligence is boundary management. Organisms learn outward regularities of the actuation bridge, including the motor commands, gestures, and utterances that reliably change the world. They use that authority to reach channel states with higher sensing value, such as line of sight, contact, shared attention, or clarification. They compress task quotients into memory, skills, and concepts. They externalize stable bridge components through tools, writing, institutions, and social conventions. This ordering is testable. Each stage predicts changes in transcript ambiguity, channel reachability, and controllability under ablations.

For AGI design, BGP suggests an explicit interface layer between a world model and a policy. The layer contains a quotient posterior for authorized task distinctions $Q$, a bridge-state model for world-side channel conditions $C$, a controllability estimator for terminal quotients $V_Q$, and a counterfactual relevance gate that suppresses empowerment over factors independent of $Q$ or future value. A bridge-aware world model represents the source of uncertainty. It distinguishes hidden state, unavailable experiments, unwritable actuator conditions, and distractor variables. This differs from training a monolithic predictor and hoping that useful exploration emerges from prediction error.

For tool-using and language agents, prompts, retrieval schemas, clarifying questions, API permissions, memory writes, tool states, and user feedback are bridge variables. Questions, tool calls, authorization requests, and memory writes have value when they reduce $\Hh(Q\mid H)$, change channel reachability, remove an authorization bottleneck, or prevent recurring quotient ambiguity. Approval and engagement signals should be ignored or penalized when they are controllable and conditionally independent of user intent. A practical evaluation follows directly. Specify the target quotient, inventory bridge ownership, measure transcript ambiguity and terminal control capacity, vary hidden initial state with the transcript fixed, and ablate action bandwidth to distinguish identification from overwrite.

\begin{table}[t]
\centering
\small
\setlength{\tabcolsep}{3.0pt}
\begin{tabularx}{\linewidth}{@{}l>{\raggedright\arraybackslash}X>{\raggedright\arraybackslash}X@{}}
\toprule
Bridge variable & Human examples & AI design examples \\
\midrule
$\phi^S$ & gaze, questioning, exploratory touch & prompt/query format, camera pose, retrieval schema \\
$\phi^X$ & occlusion, lighting, social attention & permission state, tool visibility, database access \\
$\kappa^S$ & grip, posture, practiced motor mode & controller mode, API choice, tool adapter \\
$\kappa^X$ & contact, friction, norm/affordance state & authorization, rate limits, simulator preconditions \\
$Q$ & distinctions needed for perception, skill, social action & authorized latent factors needed for robust downstream tasks \\
\bottomrule\\
\end{tabularx}
\caption{Human development and AI design use the same bridge decomposition.}
\label{tab:human_agi}
\end{table}

\section{Related work}
Bridge interfaces sit between POMDPs and controlled sensing \citep{Kaelbling1998POMDP,Krishnamurthy2016POMDP,HeroCochran2011SensorManagement}, active perception \citep{Bajcsy1988ActivePerception,Aloimonos1988ActiveVision}, Blackwell experiment refinement \citep{Blackwell1953}, information bottlenecks \citep{Tishby2000IB}, empowerment \citep{Klyubin2005Empowerment}, plasticity \citep{Abel2025Plasticity}, active inference \citep{Friston2015EpistemicValue,Millidge2021WhenceEFE}, and world models \citep{HaSchmidhuber2018WorldModels}.

The refinement order is the deterministic special case of comparing statistical experiments by informativeness \citep{Blackwell1953}. The information-gain and compression statements are related to classical information theory and information bottleneck ideas \citep{Shannon1948,CoverThomas2006,Tishby2000IB}. Active inference and expected free energy combine extrinsic and epistemic terms under stochastic generative models \citep{Friston2010FEP,Friston2015EpistemicValue,Friston2016AILearning,Friston2017ProcessTheory,ParrFriston2019GeneralisedFE,DaCosta2020DiscreteAIF,Millidge2021WhenceEFE}. The present paper isolates deterministic identifiability and resource lower bounds.

Empowerment was introduced as an agent-centric control capacity \citep{Klyubin2005Empowerment}. Directed information and transfer entropy provide feedback-aware measures of information flow \citep{Marko1973,Massey1990DI,Schreiber2000,Jiao2013DI}. Recent work on plasticity studies the observation-to-action mirror of empowerment \citep{Abel2025Plasticity}, and the Artificial Agency Program connects curiosity, compression, empowerment, and interface quality as a research agenda \citep{Csaky2026AAP}. The finite bridge budget here is narrower. It is an exact capacity statement when action--observation dependence is mediated by a finite transcript.

\section{Discussion}
\label{sec:benchmarks}\label{sec:related_limitations}

The four benchmark families from \cref{prop:algorithmic_separations} are exact finite enumerations. The default instances use a uniform $n=4$ bit task quotient and an $m=8$ bit distractor. Metrics are terminal transfer success and remaining ambiguity $\Hh(Q\mid H_T)$, both computed exactly from the finite state spaces.

\begin{table}[t]
\centering
\small
\setlength{\tabcolsep}{2.8pt}
\begin{tabularx}{\linewidth}{@{}l>{\raggedright\arraybackslash}Xcc@{}}
\toprule
Benchmark & Failing objective & Baseline success / $\Hh(Q\mid H_T)$ & BGP success / $\Hh(Q\mid H_T)$ \\
\midrule
Settable distractor & ungated empowerment controls irrelevant $D$ & 6.2\% / 4 & 100\% / 0 \\
Delayed sensor & one-step IG/EFE sees zero immediate information & 6.2\% / 4 & 100\% / 0 \\
Inspect-overwrite & terminal prediction loss chooses overwrite & 6.2\% / 4 & 100\% / 0 \\
Quotient transfer & coarse return ignores refined bits & 25.0\% / 2 & 100\% / 0 \\
\bottomrule\\
\end{tabularx}
\caption{Exact finite benchmark results for $n=4,m=8$. The comparison uses exact optimizers for the stated baseline objectives. The failures reflect objective misspecification. Search error is ruled out.}
\label{tab:exact_results}
\end{table}

The gaps can be made arbitrarily large by increasing $m$ to make irrelevant empowerment dominate or by increasing $n$ to make the transfer success of a non-identifying policy $2^{-n}$. A full long-horizon active-inference planner with the correct hidden quotient and interface states can match BGP on some instances. The claim is sharper. Standard epistemic or control objectives need an explicit channel-reachability term, relevance gate, and absorption diagnostic to distinguish policies that differ in bridge gap. BGP adds those missing terms.

For scalable experiments, BGP should be evaluated as an intrinsic reward in a learned latent model. A recurrent world model or transformer summarizes $h_t$. Auxiliary heads estimate $q_\theta(Q\mid h_t)$, bridge reachability $q_\theta(C\mid h_t,u)$, controllability of $V_Q$, observation loss, and counterfactual relevance of factors $Y_i$. The theorem dictates the primary ablations. Removing the relevance gate should reproduce empowerment traps. Removing the channel term should reproduce delayed-sensor failures. Removing the observation-loss term should confuse hidden state control with observable control. Removing the absorption diagnostic should make overwrite and understanding indistinguishable.

\paragraph{Limitations.} The exact theorems are finite-horizon and finite-state. Continuous systems require rate-distortion quotients or discretization. The exact benchmark uses known models. Scalable BGP needs learned estimators of entropy, capacity, relevance, and observation loss. Choosing the authorized quotient $Q$ is a normative task-design problem. These restrictions keep the claim narrow. Agents become more general when they reduce the missing sensing bits, missing authority bits, and quotient mismatch that prevent prediction and control from referring to the same part of the world. A decisive negative result would be an interface-bottleneck benchmark in which the bridge-gap heads are accurate, the target quotient is correctly specified, and BGP systematically underperforms a simpler curiosity, empowerment, or active-inference objective for reasons outside the gap terms. A positive scalable result should report more than return. It should show that the learned policy reduces the predicted and measured bridge gap, that each ablation reintroduces the corresponding failure mode, and that transfer to refined quotients improves because unresolved transcript fibers shrink. A brittle shortcut would fail this check.

This also clarifies the intended empirical impact. BGP is most useful in domains where valuable actions first improve access to experiments, such as moving a camera, obtaining contact, asking a clarifying question, retrieving a document, requesting permission, or configuring a tool. These actions often have low immediate novelty and low immediate reward. They have high value as bridge refinements. The algorithmic claim is that modeling this value explicitly should improve exploration and transfer in such domains and reduce reward hacking through irrelevant controllable variables. For a top-level learning system, this suggests a concrete benchmark criterion. The agent should solve the current task and expose which bridge variable made the task solvable. That requirement is operational. The experimenter can randomize bridge conditions, hide or reveal permissions, alter action bandwidth, and refine the target quotient after training.

\paragraph{LLM disclosure.} Large language model assistance was used during the development and writing of this paper. The authors provided the motivating hypotheses, modeling choices, target claims, examples, and technical direction, and used an LLM interactively to propose formal theorem statements, organize definitions, and draft candidate proofs. The final mathematical statements, assumptions, proofs, experiments, references, and interpretation were checked, revised, and accepted by the authors, who take full responsibility for their correctness and presentation.
\clearpage
\bibliographystyle{plainnat}
\bibliography{neurips_2026_revised_v5}

\clearpage

\appendix

\section{Complete proofs for the main text}
\label{app:proofs}

\subsection{Finite stochastic unrolling}
Any finite stochastic transition or observation kernel over a finite horizon can be realized by sampling independent exogenous variables and applying deterministic inverse-CDF maps. Collecting all environment variables with the initial state and law bits gives $Z=(X_0,\Lambda,\Xi_{0:T})$. Conditional on $Z$ and on a realized private policy seed $\Sigma=s$, the trajectory is deterministic and has the same marginal law over histories as the original stochastic process after marginalizing over $(Z,\Sigma)$. The seed is conditioned on in deterministic-map claims. If the target quotient excludes future exogenous noise, that noise is quotient-collapsed. If it includes future noise, residual uncertainty is correctly counted as an irreducible prediction gap.

\subsection{Proof of \cref{thm:bridge_gap}}
The compression bound follows from the chain rule.
\begin{equation}
\I(Q,W;\Tcal)=\I(W;\Tcal)+\I(Q;\Tcal\mid W)=\I(Q;\Tcal)+\I(W;\Tcal\mid Q).
\end{equation}
Therefore
\begin{equation}
\I(Q;\Tcal)-\I(W;\Tcal)=\I(Q;\Tcal\mid W)-\I(W;\Tcal\mid Q),
\end{equation}
which is at most $\Hh(Q\mid W)$ and at least $-\Hh(W\mid Q)$. For control, fix an action-sequence distribution $p$ and condition throughout on the deterministic rollout context $\chi$. The same identity gives
\begin{equation}
\I(A;V\mid\chi)-\I(A;\widetilde V\mid\chi)=\I(A;V\mid\widetilde V,\chi)-\I(A;\widetilde V\mid V,\chi),
\end{equation}
which lies between $-\Hh(\widetilde V\mid V,\chi)$ and $\Hh(V\mid\widetilde V,\chi)$. Taking $p$ that approaches the capacity for one variable and upper-bounding the other by its capacity gives \cref{eq:control_gap}. The normalized identities are definitions of conditional entropy and capacity deficit. Vanishing bridge components give mutual recoverability of the quotients, transcript-identification of $Q$, lossless observation of $V$, and full reachability/capacity for $V$. Under these conditions the corresponding objectives attain their normalized maxima. Conversely, if the evaluated normalized objectives are jointly maximal, each nonnegative deficit in \cref{eq:gaps1,eq:gaps2} must vanish.

\subsection{Proof of \cref{thm:regret_transfer}}
Let $u_j^\star\in\arg\max_{u\in\mathcal U}J_j(u)$. Since $\hat u$ is $\eta$-optimal for $J_i$,
\begin{align}
J_j(u_j^\star)-J_j(\hat u)
&=[J_j(u_j^\star)-J_i(u_j^\star)]+[J_i(u_j^\star)-J_i(\hat u)]+[J_i(\hat u)-J_j(\hat u)]\\
&\le \eta+\sup_{u\in\mathcal U}(J_j(u)-J_i(u))+\sup_{u\in\mathcal U}(J_i(u)-J_j(u))\\
&=\eta+\Omega_{ij}(\mathcal U).
\end{align}
The bound is tight. Take two options $a,b$ with $J_i(b)=1,J_j(b)=1$ and $J_i(a)=1-\eta,J_j(a)=1-\eta-\Omega$, with the remaining values adjusted to keep objectives in $[0,1]$. Then $a$ is $\eta$-optimal for $J_i$, the oscillation of $J_i-J_j$ is $\Omega$, and $a$ has $J_j$ regret $\eta+\Omega$ whenever the parameters lie in the feasible range. If bridge-gap inequalities imply $|J_i(u)-J_j(u)|\le\varepsilon$ for all $u$, then $J_i-J_j\in[-\varepsilon,\varepsilon]$ pointwise and hence $\Omega_{ij}\le2\varepsilon$. If they instead bound the range width of $J_i-J_j$ by $\varepsilon$, then directly $\Omega_{ij}\le\varepsilon$.

\subsection{Proof of \cref{thm:fiber}}
Condition on a realized private seed $\Sigma=s$, or take the policy to be deterministic. For log loss, the Bayes predictor is the posterior distribution $p(Y\mid\Tcal_T,\Sigma=s)$ and its risk is $\Hh(Y\mid\Tcal_T,\Sigma=s)$. Because $Y=R_{T,s}^\pi(Z)$ and $\Tcal_T=G_{T,s}^\pi(Z)$ are deterministic functions of $Z$ under this conditioning, zero conditional entropy holds exactly when $Y$ is constant on every positive-prior fiber of $G_{T,s}^\pi$. Taking $Y=Z$ gives microstate-losslessness exactly when $G_{T,s}^\pi$ is injective on $\supp(p_0)$.

\subsection{Proof of \cref{cor:general_prediction}}
The quotient $Q_{\mathcal R}$ is the coarsest variable that determines every target in $\mathcal R$. Thus all targets in $\mathcal R$ are perfectly predictable from $\Tcal_T$ iff $Q_{\mathcal R}$ is. If $Q_{\mathcal R}=Z$ on the support, this is microstate-losslessness.

\subsection{Proof of \cref{lem:emp_reach}}
For fixed deterministic rollout context $\chi$, $V$ is a deterministic function of the action sequence. Hence $\I(A;V\mid\chi)\le\Hh(V\mid\chi)\le\log|\Reach_T^V(\chi)|$. Equality is achieved by choosing an action-sequence distribution whose induced distribution over the reachable values of $V$ is uniform, selecting one representative action sequence for each reachable value.

\subsection{Proof of \cref{prop:separation}}
For (i), let $Z$ be a uniform $n$-bit latent observed directly at $t=0$, and let actions have no effect. Identification is perfect and empowerment is zero. For (ii), let $Z$ be a uniform $n$-bit latent never observed, and let action set $\{0,1\}^m$ set an independent terminal distractor $D=A$. The transcript has zero information about $Z$. Empowerment over $D$ is $m$ bits. For (iii), let $Z$ be any latent, let the action overwrite the terminal target to a constant $V=0$, and let the agent remember only that it executed the overwrite. Then $\Hh(V\mid M_T)=0$ and $\I(Z;M_T)=0$.

\subsection{Proof of \cref{thm:missing_bits}}
If $\Hh(Q\mid\Tcal,B)=0$, then
\begin{equation}
\Hh(Q\mid\Tcal)=\I(Q;B\mid\Tcal)\le\Hh(B\mid\Tcal).
\end{equation}
For tightness, choose $B$ to be the posterior-fiber label $Q$ itself. Then $\Hh(Q\mid\Tcal,B)=0$ and $\Hh(B\mid\Tcal)=\Hh(Q\mid\Tcal)$.

\subsection{Proof of \cref{thm:refinement}}
Since $O_\eta$ is a deterministic function of $Z$ once actions and private seed are conditioned on, $\Hh(O_\eta\mid Z)=0$ and $\I(Z;O_\eta)=\Hh(O_\eta)=\Hh(Z)-\Hh(Z\mid O_\eta)$. If $\eta'$ refines $\eta$, then $O_\eta=r(O_{\eta'})$, so $Z\to O_{\eta'}\to O_\eta$ is a Markov chain and data processing gives $\I(Z;O_{\eta'})\ge\I(Z;O_\eta)$. Strict refinement on positive mass increases the induced partition entropy.

\subsection{Proof of \cref{thm:bgp_shaping}}
The intrinsic reward telescopes pathwise.
\begin{equation}
\sum_{t=0}^{T-1}\bigl(\Phi(H_t)-\Phi(H_{t+1})\bigr)=\Phi(H_0)-\Phi(H_T).
\end{equation}
Taking expectation gives \cref{eq:telescoping}. If task reward is zero, maximizing the left-hand side is equivalent to minimizing $\E\Phi(H_T)$ because $\Phi(H_0)$ is fixed. With task reward, the same identity gives
\begin{equation}
\E\sum_t r_t^{\rm BGP}=\E\sum_t r_t^{\rm task}+\beta\Phi(H_0)-\beta\E\Phi(H_T),
\end{equation}
which is the stated Lagrangian up to a constant. Finite-horizon dynamic programming returns an optimal policy for this additive reward. If a terminal zero-gap history is reachable and every higher-task-reward alternative remains below the chosen Lagrangian value, an optimal policy reaches zero gap.

\subsection{Proof of \cref{prop:algorithmic_separations}}
For (i), let $Q$ be a uniform $n$-bit latent that is hidden from the distractor action, and let an action sequence set an independent $m$-bit terminal variable $D$. Ungated empowerment over all terminal observations is $m$ bits for setting $D$. It leaves $\Hh(Q\mid H_T)=n$. A zero-gap option that inspects $Q$ has $\Hh(Q\mid H_T)=0$. The relevance gate closes for $D$ because $D\perp Q$ and $D$ has zero effect on task value.

For (ii), let one first action enter a channel state $C^\star$ and let the other stay outside it. First-step observations are independent of $Q$, so every first action has one-step information gain and one-step epistemic value zero. The second action reveals $Q$ only after the policy reaches $C^\star$. The BGP channel term decreases when the policy reaches $C^\star$, so exact BGP chooses the unique preliminary action that permits the later lossless experiment.

For (iii), let $Q$ be uniform and let the target for terminal prediction be $V$. One action reveals $Q$ and leaves $V=Q$. Another overwrites $V:=0$ and leaves $Q$ unrevealed. Both yield zero terminal log-loss for $V$, and a cost-tie-break can select overwrite. The overwrite policy has $\Hh(Q\mid M_T)=n$ and falls into the collapse branch of \cref{thm:absorption_trichotomy}. BGP selects inspection when the target quotient is $Q$ or when transfer to a $Q$-preserving target is evaluated.

For (iv), let $Q=(Q_c,Q_f)$ with $Q_f$ a uniform $n$-bit refinement independent of $Q_c$. A coarse training objective rewards only $Q_c$. Inspecting $Q_c$ achieves maximal training return and leaves $\Hh(Q_f\mid H_T)=n$. BGP evaluated on the refined quotient includes these missing bits in $\Phi$ and selects the refined inspection when reachable.

\subsection{Proof of \cref{thm:di_budget_main}}
Massey's conservation law for causal sequence distributions gives
\[
\I(A^T\to O^T)+\I(O^{T-1}\to A^T)=\I(A^T;O^T).
\]
If $A^T\perp O^T\mid B$, then $A^T\to B\to O^T$ is a Markov chain. Data processing gives $\I(A^T;O^T)\le \I(A^T;B)\le \Hh(B)\le b$.

\subsection{Proof of \cref{thm:absorption_trichotomy}}
If $\Hh(V\mid M_T)=0$, then $V$ is a deterministic function of $M_T$. Hence $Q\to M_T\to V$ is a Markov chain and data processing gives $\I(Q;M_T)\ge\I(Q;V)$. Also,
\begin{equation}
\Hh(Q\mid M_T)\le \Hh(Q,V\mid M_T)=\Hh(V\mid M_T)+\Hh(Q\mid V,M_T)\le \Hh(Q\mid V),
\end{equation}
where the first term is zero. If $\Hh(Q\mid V)=0$, then $\Hh(Q\mid M_T)=0$. The overwrite/collapse and omission cases occur when $V$ fails to preserve some distinctions in $Q$. The cause can be policy collapse or target-quotient omission.

\subsection{Proof of \cref{cor:bit_costs_main}}
If $\Hh(Q\mid M_T)=0$, then $\Hh(Q)=\I(Q;M_T)\le\Hh(M_T)\le\log|\mathcal M_T|$. The empowerment statement is \cref{lem:emp_reach}. Full authority from context $\chi$ means $|\Reach_T^V(\chi)|=|\supp(V\mid\chi)|$.

\section{Directed-information bridge budget}
\label{app:di_budget}

Use the same feedback-channel indexing as in \cref{sec:budget}: $A^T=(A_1,\ldots,A_T)$ and $O^T=(O_1,\ldots,O_T)$, where $A_t$ is chosen after $O^{t-1}$ and before $O_t$. Define
\begin{align}
  \I(A^T\to O^T)&=\sum_{t=1}^T \I(A^t;O_t\mid O^{t-1}),\label{eq:di_out}\\
  \I(O^{T-1}\to A^T)&=\sum_{t=1}^T \I(O^{t-1};A_t\mid A^{t-1}).\label{eq:di_in}
\end{align}
The first term is realized outward influence. The second is observation-to-action plasticity.

\begin{theorem}[Finite bridge budget]
\label{thm:di_budget}
Assume the joint law is causal and all statistical dependence between $A^T$ and $O^T$ is mediated by a specified finite bridge transcript $B$ satisfying $\Hh(B)\le b$, so $A^T\perp O^T\mid B$. The variable $B$ is the interface transcript being bounded. Then
\begin{equation}
  \I(A^T\to O^T)+\I(O^{T-1}\to A^T)=\I(A^T;O^T)\le b. \label{eq:budget}
\end{equation}
\end{theorem}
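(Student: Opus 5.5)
The proof has two independent parts: an exact identity (the conservation law) and an inequality (the bottleneck bound). For the identity, I would not just cite Massey's conservation law but derive it directly from the chain rule, since the indexing convention here ($A_t$ chosen after $O^{t-1}$, before $O_t$) must match exactly. Expand the joint mutual information telescopically over the interleaved sequence $A_1,O_1,A_2,O_2,\ldots,A_T,O_T$:
\begin{equation}
  \I(A^T;O^T)=\Hh(O^T)-\Hh(O^T\mid A^T)
  =\sum_{t=1}^T\bigl[\Hh(O_t\mid O^{t-1})-\Hh(O_t\mid O^{t-1},A^T)\bigr].
\end{equation}
Causality of the joint law means $O_t$ is conditionally independent of the future actions $A_{t+1}^T$ given $(A^t,O^{t-1})$. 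So $\Hh(O_t\mid O^{t-1},A^T)$ differs from $\Hh(O_t\mid O^{t-1},A^t)$ by a correction term. I would track that correction rather than drop it.

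The cleaner route is to write
\begin{equation}
  \I(A^T;O^T)=\Hh(A^T)+\Hh(O^T)-\Hh(A^T,O^T),
\end{equation}
and split the joint entropy by the interleaved chain rule:
\begin{equation}
  \Hh(A^T,O^T)=\sum_t\Hh(A_t\mid A^{t-1},O^{t-1})+\sum_t\Hh(O_t\mid A^t,O^{t-1}).
\end{equation}
The second sum equals $\Hh(O^T)-\I(A^T\to O^T)$, using $\I(A^t;O_t\mid O^{t-1})=\Hh(O_t\mid O^{t-1})-\Hh(O_t\mid A^t,O^{t-1})$. Symmetrically, the first sum equals $\Hh(A^T)-\I(O^{T-1}\to A^T)$, using $\I(O^{t-1};A_t\mid A^{t-1})=\Hh(A_t\mid A^{t-1})-\Hh(A_t\mid A^{t-1},O^{t-1})$. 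Substituting both gives the identity exactly. It is purely algebraic: causality is needed only to justify the interleaved ordering as the interpretation, not the equality itself.

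For the bound, use the hypothesis $A^T\perp O^T\mid B$, which makes $A^T\to B\to O^T$ a Markov chain. The data processing inequality then gives $\I(A^T;O^T)\le\I(A^T;B)\le\Hh(B)\le b$, exactly as in the proof of \cref{thm:di_budget_main}.

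The main obstacle is only bookkeeping: matching the index ranges. The inward term conditions on $O^{t-1}$ with $O^0$ empty, so its $t=1$ summand vanishes. I expect the only care needed is to state that the initial observation is absorbed into the conditioning, so that the chain-rule split has no leftover $O_0$ term. If $O_0$ were kept explicitly, every quantity would be conditioned on it throughout and the argument would be unchanged.
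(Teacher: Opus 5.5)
Your proof is correct and has the same structure as the paper's: first the conservation identity, then the Markov chain $A^T\to B\to O^T$ with data processing, $\I(A^T;O^T)\le\I(A^T;B)\le\Hh(B)\le b$. The only difference is that you derive Massey's conservation law from the interleaved chain rule instead of citing it, which also correctly shows the equality needs no causality hypothesis. Drop the abandoned first route, though: its claim that causality makes $O_t$ conditionally independent of $A_{t+1}^T$ given $(A^t,O^{t-1})$ is false under feedback, since $A_{t+1}$ is chosen after seeing $O_t$.
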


\begin{proof}
Massey's conservation law for causal sequence distributions states the displayed decomposition of mutual information. If $A^T\perp O^T\mid B$, then $A^T\to B\to O^T$ is a Markov chain. By data processing, $\I(A^T;O^T)\le \I(A^T;B)\le \Hh(B)\le b$.
\end{proof}

\section{Expanded deterministic bridge-interface model}
\label{app:expanded_model}

\paragraph{Finite deterministic microdynamics.}
The finite-state assumption avoids measure-theoretic pathologies of deterministic maps on continuous spaces and makes each bottleneck countable in bits. The environment microstate $X_t$ is a finite string or symbolic state. The law bits $\Lambda$ index a deterministic transition family,
\begin{equation}
  X_{t+1}=F_t(X_t,U_t,\Lambda), \qquad \Lambda_{t+1}=\Lambda_t=\Lambda . \label{eq:app_det_world}
\end{equation}
The prior over $Z=(X_0,\Lambda)$ is epistemic; exogenous stochasticity can be included by enlarging $Z$ to $(X_0,\Lambda,\Xi_{0:T})$. Once $Z$ and a realized private policy seed are fixed, the trajectory is deterministic. Ordinary stochastic POMDP kernels can be recovered by marginalizing over the environment variables and the policy seed. The deterministic refinement keeps track of which uncertainty could in principle be removed by a better transcript.

\paragraph{Actuation as a bridge.}
The intended action can differ from the realized physical control. Let $A_t^x$ denote the intended environment-targeting component of the action, and split the actuation bridge as
\begin{equation}
  \kappa_t=(\kappa_t^S,\kappa_t^X),\qquad \kappa_t^S=k_t^S(M_t),\quad \kappa_t^X=k_t^X(X_t,\Lambda).
\end{equation}
The realized control is
\begin{equation}
  U_t^x=\Gamma_t(A_t^x,\kappa_t^S,\kappa_t^X). \label{eq:app_actuation_bridge}
\end{equation}
Improving actuation can mean changing an agent-owned actuator setting, reaching a different world-side contact or permission condition, or identifying $\kappa_t^X$ well enough that the realized control is predictable. A deterministic actuator can still be lossy from the agent's perspective when $\kappa_t^X$ is hidden.

\paragraph{Sensing as a bridge.}
The observation map also depends on bridge state. Under the observe--act--observe timing convention, the agent chooses the next agent-owned sensor setting after $O_t$, and the world contributes the next channel state:
\begin{equation}
  O_{t+1}=h_{t+1}(X_{t+1},\phi_{t+1}^S,\phi_{t+1}^X),\qquad
  \phi_{t+1}^S=q_t^S(M_t,A_t),\quad \phi_{t+1}^X=q_{t+1}^X(X_{t+1},\Lambda). \label{eq:app_sensing_bridge}
\end{equation}
The initial observation $O_0$ is generated by a fixed initial setting or by a setting chosen before the episode. The same camera, probe, prompt, or API call can induce different experiments depending on line of sight, lighting, object pose, authorization, contact geometry, social context, or remote tool state. Losslessness is a property of the closed-loop transcript over a horizon. A pointwise property of $h_t$ alone is insufficient.

\paragraph{Finite-memory agent and learning.}
The agent-owned state $M_t$ includes fast state, persistent parameters, scratch memory, and any interface settings it can reliably maintain. The finite-capacity condition is
\begin{equation}
  |\mathcal{M}_t|\le 2^{n_s(t)},\qquad \Hh(M_t)\le n_s(t). \label{eq:app_finite_memory}
\end{equation}
Learning is modeled as internal dynamics,
\begin{equation}
  M_{t+1}=G_t(M_t,A_t,O_{t+1}), \label{eq:app_agent_update}
\end{equation}
This convention keeps learning inside the agent state and prevents an unbounded parameter store from bypassing the memory lower bounds. In the bit-budget specialization, all retained agent-owned degrees of freedom count toward $n_s(t)$.

\paragraph{Random variables and realizations.}
Uppercase symbols denote random variables induced by the prior over $Z$ and any private policy seed. Lowercase symbols denote realized values. Conditional on a realized $z$ and seed $s$, the full trajectory is deterministic. Entropies such as $\Hh(Z\mid\Tcal_T,\Sigma=s)$ are posterior uncertainties over possible deterministic worlds. They carry no implication of objective randomness in the world dynamics.

\section{Lossiness, coverage, and action-conditioned information gain}
\label{app:lossiness_coverage}

\paragraph{Instantaneous and horizon lossiness.}
The bridge model has two distinct bottlenecks. The instantaneous sensing loss is
\begin{equation}
  \mathcal{L}^{\rm sense}_t=\Hh(X_t\mid O_t,\phi_t^S,\phi_t^X). \label{eq:app_sense_loss}
\end{equation}
The online actuation loss is
\begin{equation}
  \mathcal{L}^{\rm act}_t=\Hh(U_t\mid H_t,A_t,\kappa_t^S), \label{eq:app_act_loss}
\end{equation}
where $H_t$ is the agent's decision-time information. This positivity can occur with a deterministic \cref{eq:app_actuation_bridge} because the environment-owned component $\kappa_t^X$ may be hidden at decision time. For identifiability, the relevant object is the horizon loss
\begin{equation}
  \mathcal{L}^{(T)}=\Hh(Z\mid O_{0:T},A_{0:T-1},\phi_{0:T}^S,\kappa_{0:T-1}^S). \label{eq:app_horizon_loss}
\end{equation}
This is the entropy form of microstate-losslessness in \cref{def:lossless}. Every per-step sensor may be individually lossy, and the full sequence can still be jointly separating.

\begin{assumption}[Interface coverage / refinability]
\label{ass:app_coverage}
For every $\varepsilon>0$ there exist a horizon $T$ and an admissible closed-loop design of actions and bridge settings such that $\mathcal{L}^{(T)}\le\varepsilon$.
\end{assumption}
This is an achievability assumption. It applies to interface families with separating experiments and policy classes that can execute them under memory and viability constraints.

\paragraph{Action-conditioned information gain.}
A raw objective that maximizes observation entropy is degenerate because the agent can create entropy by randomizing actions or injecting private randomness. The non-degenerate quantity conditions on the agent's chosen action and seed. For decision history $H_t$, define
\begin{equation}
  \mathrm{IG}_t=\I(Z;O_{t+1}\mid H_t,A_t). \label{eq:app_ig}
\end{equation}

\begin{lemma}[Information gain telescopes]
\label{lem:app_ig_telescopes}
Let $H_{t+1}=(H_t,A_t,O_{t+1})$. Then
\begin{equation}
  \mathrm{IG}_t=\Hh(Z\mid H_t,A_t)-\Hh(Z\mid H_{t+1}). \label{eq:app_ig_drop}
\end{equation}
If $A_t$ is a deterministic function of $H_t$, then $\Hh(Z\mid H_t,A_t)=\Hh(Z\mid H_t)$, and the cumulative gain satisfies
\begin{equation}
  \sum_{t=0}^{T-1}\mathrm{IG}_t=\Hh(Z\mid H_0)-\Hh(Z\mid H_T). \label{eq:app_ig_telescoped}
\end{equation}
\end{lemma}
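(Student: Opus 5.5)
The first identity follows from the chain rule for conditional mutual information. Since $H_{t+1}=(H_t,A_t,O_{t+1})$, conditioning on $H_{t+1}$ is the same as conditioning on $(H_t,A_t,O_{t+1})$. Hence
\begin{equation*}
\I(Z;O_{t+1}\mid H_t,A_t)=\Hh(Z\mid H_t,A_t)-\Hh(Z\mid H_t,A_t,O_{t+1})=\Hh(Z\mid H_t,A_t)-\Hh(Z\mid H_{t+1}),
\end{equation*}
which is \cref{eq:app_ig_drop}. No determinism of the environment is needed here. All spaces are finite, so every entropy is finite and the subtraction is legitimate. When a private seed $\Sigma$ is present, I would carry it in the conditioning throughout, or fold it into $H_0$. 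The identity is unchanged, and this is consistent with the paper's convention that the seed is conditioned on.

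For the second claim, if $A_t=a_t(H_t)$ is a deterministic function, then $\sigma(H_t,A_t)=\sigma(H_t)$. Equivalently, $\Hh(A_t\mid H_t)=0$, so
\begin{equation*}
\Hh(Z\mid H_t)=\Hh(Z,A_t\mid H_t)=\Hh(Z\mid H_t,A_t).
\end{equation*}
Substituting into \cref{eq:app_ig_drop} gives $\mathrm{IG}_t=\Hh(Z\mid H_t)-\Hh(Z\mid H_{t+1})$. Summing over $t=0,\ldots,T-1$ telescopes to $\Hh(Z\mid H_0)-\Hh(Z\mid H_T)$, which is \cref{eq:app_ig_telescoped}.

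The only point needing care is the randomized-policy case. There $A_t$ is not a function of $H_t$ alone, and $\Hh(Z\mid H_t,A_t)$ may differ from $\Hh(Z\mid H_t)$ only if $A_t$ depends on something other than $H_t$. The cleanest fix is to condition on the realized seed $\Sigma=s$ as the paper prescribes, which makes $A_t$ a deterministic function of $(H_t,s)$. Alternatively, one can note that if $A_t$ is drawn from $\pi(\cdot\mid H_t)$ using randomness independent of $Z$, then $A_t\perp Z\mid H_t$, which also gives $\Hh(Z\mid H_t,A_t)=\Hh(Z\mid H_t)$. I would state the deterministic case as required and remark that the telescoping extends under this conditional independence.
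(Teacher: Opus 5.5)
Your proposal is correct and follows the paper's proof: the first identity is the standard expansion of conditional mutual information, $H_t$-measurability of $A_t$ removes $A_t$ from the conditioning, and the one-step posterior drops telescope. Your remark on randomized policies is a valid extension the lemma does not need. The claim $A_t\perp Z\mid H_t$ holds even with a seed shared across time, because the closed-loop likelihood factors into a $Z$-part and a $\Sigma$-part.
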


\begin{proof}
The first identity is the definition of conditional mutual information. If $A_t$ is $H_t$-measurable, conditioning additionally on $A_t$ leaves the posterior over $Z$ unchanged. Summing the one-step posterior drops gives the telescoping identity.
\end{proof}

\begin{proposition}[Observation-compression progress equals hidden-state compression]
\label{prop:app_obs_progress}
In the deterministic bridge model, with actions and private seed conditioned on, maximizing cumulative action-conditioned observation novelty over a refinement-closed experiment class is equivalent to maximizing $\I(Z;H_T)$ and minimizing $\Hh(Z\mid H_T)$. If Assumption~\ref{ass:app_coverage} holds and the agent has sufficient memory to retain the latent-relevant posterior statistic, the optimum can approach microstate-losslessness.
\end{proposition}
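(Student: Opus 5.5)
The plan is to reduce everything to the telescoping identity of \cref{lem:app_ig_telescopes} together with the deterministic-experiment identity of \cref{thm:refinement}. First I would fix the conditioning once and for all: condition on the realized private seed $\Sigma=s$ and on the chosen action sequence, so that the closed-loop history $H_T$ becomes a deterministic function $G_{T,s}^\pi(Z)$, exactly as in \eqref{eq:maps_v4}. In this conditional model the per-step action-conditioned novelty is $\mathrm{IG}_t=\I(Z;O_{t+1}\mid H_t,A_t)$.

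Because $O_{t+1}$ is determined by $Z$ once $(H_t,A_t,s)$ are fixed, $\Hh(O_{t+1}\mid Z,H_t,A_t)=0$. Hence $\mathrm{IG}_t=\Hh(O_{t+1}\mid H_t,A_t)$, which says that action-conditioned observation novelty (observation-compression progress) and latent information gain are literally the same number at each step.

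Next I would sum over $t$. Under a deterministic policy, or conditional on the seed, $A_t$ is $H_t$-measurable. \cref{lem:app_ig_telescopes} then gives
\[
\sum_t\mathrm{IG}_t=\Hh(Z\mid H_0)-\Hh(Z\mid H_T).
\]
Adding the fixed initial term $\I(Z;H_0)$, which is the same for every design, the cumulative objective equals $\I(Z;H_T)$ up to that constant. Since $\Hh(Z)$ is also fixed, the objective equals $\Hh(Z)-\Hh(Z\mid H_T)$ up to a constant. Maximizing cumulative novelty over any experiment class is therefore equivalent to maximizing $\I(Z;H_T)$, and also to minimizing $\Hh(Z\mid H_T)$.

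The role of refinement-closedness is to make the ordering meaningful. If a design $\eta'$ Blackwell-refines $\eta$, then \cref{thm:refinement} gives $\I(Z;O_{\eta'})\ge\I(Z;O_\eta)$, so the optimizer is never penalized for moving to finer experiments, and a maximizer sits at a maximal element of the class. I would state explicitly that randomized actions contribute nothing, because the conditioning on $A_t$ and $s$ removes private entropy. This is the point the degeneracy remark in the appendix is making.

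For the second sentence, Assumption~\ref{ass:app_coverage} supplies, for each $\varepsilon>0$, a horizon and an admissible design with $\mathcal L^{(T)}\le\varepsilon$. Since the history contains the agent-owned settings, $\Hh(Z\mid H_T)\le\mathcal L^{(T)}\le\varepsilon$ for that design. By the equivalence just shown, the novelty-optimal design in the class does at least as well. Letting $\varepsilon\to0$, the optimum approaches $\Hh(Z\mid H_T)=0$, which is microstate-losslessness by \cref{thm:fiber}.

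The memory clause is needed because the proposition concerns the agent, not just the transcript. I would argue that if $M_T$ retains a sufficient statistic of $H_T$ for $Z$, then $\Hh(Z\mid M_T)=\Hh(Z\mid H_T)$. By \cref{cor:bit_costs_main}, this requires $\log|\mathcal M_T|\ge\Hh(Z)-\varepsilon$, which is exactly what the ``sufficient memory'' hypothesis is meant to supply.

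The main obstacle is one of interpretation rather than calculation. The design is itself chosen adaptively from $H_t$, so the maximization is over closed-loop policies, and one must check that the coverage design lies in the same admissible class and respects the memory bound. My first attempt would be to define the experiment class as the set of admissible closed-loop designs, which makes coverage directly applicable. I would then argue that refinement-closure together with the telescoping identity suffices, so no further convexity or monotonicity argument is needed.
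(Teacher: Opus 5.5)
Your proposal is correct and follows the paper's route. Conditioning on the actions and seed makes the transcript a deterministic function of $Z$, so the deterministic-experiment identity of \cref{thm:refinement} and the telescoping \cref{lem:app_ig_telescopes} identify cumulative novelty with $\Hh(Z\mid H_0)-\Hh(Z\mid H_T)$, and sufficient memory lets $M_T$ keep the history's posterior partition. You are more explicit than the paper in invoking Assumption~\ref{ass:app_coverage} for the approach to losslessness. Strictly, $\Hh(Z\mid H_T)=\mathcal L^{(T)}$ there because the agent-owned settings are $H_T$-measurable given the seed, not because the history literally contains them.
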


\begin{proof}
Conditioning on actions and seed makes the transcript a deterministic function of $Z$. Hence \cref{thm:refinement} applies to each experiment and \cref{lem:app_ig_telescopes} identifies cumulative information gain with posterior entropy reduction. Sufficient memory allows the final agent state to preserve the same posterior partition as the history.
\end{proof}

\paragraph{Target quotients and safety-relevant ambiguity.}
For a target family $\mathcal{R}$, define $z\sim_{\mathcal{R}}z'$ if $R(z)=R(z')$ for every $R\in\mathcal{R}$. Robust prediction often needs the quotient $Z/{\sim_{\mathcal{R}}}$. The full microstate appears only in the limiting case where the target family is task-separating. This distinction is important for privacy and resource constraints. Bridge refinement should stay limited to authorized and task-relevant latent distinctions.

\section{Interface empowerment, bottlenecks, and state-mediated observation control}
\label{app:interface_empowerment}

\paragraph{Observation, state, and interface empowerment.}
The main text uses terminal-state empowerment. The same deterministic capacity form applies to any finite future variable. Here $x$ denotes a fully specified deterministic rollout context, including the law and exogenous bits needed for deterministic reachability. For a future variable $Y_T$ induced by an action sequence from $x$, define
\begin{equation}
  \Emp_T^{Y}(x)=\max_{p(a_{0:T-1})}\I(A_{0:T-1};Y_T\mid x).
\end{equation}
If $Y_T$ is a deterministic function of the action sequence conditional on $x$, this capacity equals the log-size of the reachable image of $Y_T$. We use three special cases,
\begin{align}
  \Emp_T^{O}(x)&=\log_2|\{O_T(a_{0:T-1};x):a_{0:T-1}\in\Acal^T\}|,\\
  \Emp_T^{X}(x)&=\log_2|\Reach_T(x)|,\\
  \Emp_T^{C}(x)&=\log_2|\{c(x'):x'\in\Reach_T(x)\}|.
\end{align}
The last quantity is interface empowerment. It measures authority over world-side channel states.

\begin{lemma}[Observation empowerment is bottlenecked by sensing]
\label{lem:app_obs_bottleneck}
Let $O_T=h_T(X_T,\phi_T^S,\phi_T^X)$ be deterministic given $X_T$ and bridge state. Then, for fixed initial $x$ and fixed admissible bridge schedule,
\begin{equation}
  \Emp_T^{O}(x)\le \Emp_T^{X}(x). \label{eq:app_obs_bottleneck}
\end{equation}
Equality requires the observation map to be injective on the reachable terminal states used by a capacity-achieving action distribution.
\end{lemma}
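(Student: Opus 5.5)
The plan is to reduce both empowerments to log-sizes of reachable images and then compare those images through the observation map. Fix the initial context $x$ and the admissible bridge schedule. Then $\phi_T^S,\phi_T^X$ are determined by the action sequence together with $x$. Under deterministic dynamics, both $X_T$ and $O_T$ are deterministic functions of $a_{0:T-1}$ given $x$, so \cref{lem:emp_reach} applies to each. This gives
\[
\Emp_T^{O}(x)=\log_2|\mathcal O_T(x)|
\qquad\text{and}\qquad
\Emp_T^{X}(x)=\log_2|\Reach_T(x)|,
\]
where $\mathcal O_T(x)$ is the set of reachable terminal observations.

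The first key step is to show that the reachable observations form the image of the reachable states. Because the bridge schedule is fixed, the world-side channel $\phi_T^X=\ell_T^X(X_T,\cdot)$ is itself a function of $X_T$ and the fixed context. The agent-side setting $\phi_T^S$ is fixed by the schedule. Hence there is a single map $\tilde h$ with $O_T=\tilde h(X_T)$ along every action sequence, and $\mathcal O_T(x)=\tilde h(\Reach_T(x))$. An image has at most as many elements as its domain, so $|\mathcal O_T(x)|\le|\Reach_T(x)|$. Taking logs gives \cref{eq:app_obs_bottleneck}.

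An information-theoretic route gives the same inequality without counting. For any action distribution $p$ we have the Markov chain $A\to X_T\to O_T$, so data processing yields $\I(A;O_T\mid x)\le \I(A;X_T\mid x)\le \Emp_T^X(x)$. Maximizing over $p$ gives the bound again. This route is also the right frame for the equality clause.

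For the equality claim, I would take a capacity-achieving distribution $p^\star$ for $O_T$ and write the gap as
\[
\I(A;X_T\mid x)-\I(A;O_T\mid x)=\Hh_{p^\star}(X_T\mid O_T,x),
\]
which holds because both variables are deterministic in $A$. If equality holds, then $\Emp_T^X=\I_{p^\star}(A;O_T)\le\I_{p^\star}(A;X_T)\le\Emp_T^X$. So the residual entropy $\Hh_{p^\star}(X_T\mid O_T,x)$ vanishes, which means $\tilde h$ is injective on the terminal states charged by $p^\star$. Moreover $p^\star$ must make $X_T$ uniform on all of $\Reach_T(x)$.

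The main obstacle is the bookkeeping that justifies treating $\phi_T^S$ and $\phi_T^X$ as fixed or as functions of $X_T$. If the agent-side setting instead depended on the action sequence in a way not captured by $X_T$, then $O_T$ would not factor through $X_T$. The phrase ``fixed admissible bridge schedule'' is what rules this out, so I would state that reading explicitly before running the argument.
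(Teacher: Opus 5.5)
Your proof is correct and follows the paper's argument: with the bridge schedule fixed, $O_T$ factors through $X_T$, so the reachable observations are the image of $\Reach_T(x)$, and the log-image-size form of capacity gives the inequality. Your data-processing route and your entropy-gap argument for the equality clause (giving $\Hh_{p^\star}(X_T\mid O_T,x)=0$) are sound additions. The paper's proof does not address the equality clause at all, and your remark that $\phi_T^X$ is itself a function of $X_T$ makes the factorization step more explicit than the paper does.
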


\begin{proof}
The deterministic channel from action sequences to $O_T$ factors through $X_T$ and bridge state. Under a fixed bridge schedule, the image of $O_T$ has cardinality at most the image of $X_T$. The capacity form as log image size gives the result.
\end{proof}

\paragraph{State-mediated observation control.}
Directly writing an observation register gives poor evidence about the world. Define state-mediated observation control as the portion of observation reachability that passes through a change in environment state. A bypass channel is excluded by the Markov factorization
\begin{equation}
  A_{0:T-1}\longrightarrow X_T\longrightarrow O_T
\end{equation}
conditional on initial state and known bridge settings. Under this condition, data processing gives
\begin{equation}
  \I(A_{0:T-1};O_T\mid x)\le \I(A_{0:T-1};X_T\mid x). \label{eq:app_state_med_dpi}
\end{equation}
Thus progress in controllable observations can come from enlarging the reachable set of relevant world states or refining the sensing bridge so that distinctions in $X_T$ survive into $O_T$.

\paragraph{Why interface empowerment is necessary when refinement requires steering.}
Suppose there is a set $\Ccal^\star$ of world-side channel states in which the sensing experiment is lossless for the target quotient, and outside $\Ccal^\star$ every admissible experiment leaves posterior entropy at least $\ell>0$. If $\Emp_T^C(x)=0$ and the current channel state is outside $\Ccal^\star$, every policy lacks guaranteed access to a lossless experiment by horizon $T$. A design sequence that drives $\Hh(Z\mid\Tcal_T)\to0$ in environments where losslessness is possible only after entering $\Ccal^\star$ must make $C_t$ action-dependent along the relevant histories. This is the formal sense in which prediction pressure creates target-conditioned interface empowerment.

\begin{proposition}[Compression pressure implies interface empowerment under steering-only refinability]
\label{prop:app_progress_interface_emp}
Assume Assumption~\ref{ass:app_coverage}. Further assume that every sequence of experiments with $\Hh(Z\mid\Tcal_T)\to0$ must reach a set of world-side channel states $\Ccal^\star$ from which lossless or asymptotically lossless experiments are available. If the uncontrolled channel process stays outside $\Ccal^\star$ on the relevant support, then any admissible design sequence achieving $\Hh(Z\mid\Tcal_T)\to0$ must have nonzero interface empowerment $\Emp_T^C(x)>0$ for the corresponding initial states.
\end{proposition}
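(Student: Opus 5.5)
The argument is by contraposition, using the reachability form of interface empowerment. Fix an initial context $x$ in the relevant support and suppose $\Emp_T^C(x)=0$. I will show that no admissible design started from $x$ can drive $\Hh(Z\mid\Tcal_T)$ to zero. By \cref{lem:emp_reach}, applied to the deterministic future variable $C_T=c(X_T)$, we have $\Emp_T^C(x)=\log_2|\{c(x'):x'\in\Reach_T(x)\}|$. So zero interface empowerment means the channel state at horizon $T$ takes a single value across all action sequences. That value must be the one reached by the uncontrolled (default-action) channel process.

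The first step strengthens this terminal statement to the whole trajectory. If $\Emp_t^C(x)=0$ for every $t\le T$, then $C_t$ is action-independent along every history, so it coincides with the uncontrolled channel process at every step. I will read "nonzero interface empowerment for the corresponding initial states" as nonzero at some horizon $t\le T$. Under that reading, the contrapositive hypothesis is exactly $\Emp_t^C(x)=0$ for all $t\le T$.

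The second step uses the hypothesis that the uncontrolled channel process stays outside $\Ccal^\star$ on the relevant support. It follows that every admissible closed-loop design from $x$ keeps $C_t\notin\Ccal^\star$ for all $t\le T$. Randomized policies are handled by conditioning on the seed $\Sigma=s$, as in the deterministic-map convention, so each seed-conditioned rollout is one of these designs.

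The third step invokes the steering-only refinability assumption. Every sequence of designs with $\Hh(Z\mid\Tcal_T)\to0$ must reach $\Ccal^\star$. A sequence whose members all avoid $\Ccal^\star$ therefore cannot have posterior entropy tending to zero. If I use the quantitative version stated in the preceding paragraph, I get more: every admissible experiment outside $\Ccal^\star$ leaves posterior entropy at least $\ell>0$, so the entropy is bounded below by $\ell$. Either way this contradicts the assumed vanishing of $\Hh(Z\mid\Tcal_T)$, and hence $\Emp_t^C(x)>0$ for some $t\le T$. Assumption~\ref{ass:app_coverage} is used only to ensure that vanishing-entropy design sequences exist, so the conclusion is not vacuous.

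The main obstacle is quantifier bookkeeping across the design sequence. The horizons $T_k$ vary with $k$, and the conclusion is stated "for the corresponding initial states," so I must check that the contradiction is per initial state rather than on average over the prior. My plan is to argue on the event of initial states of positive mass where $\Emp^C$ vanishes at every horizon. On that event, the posterior entropy is bounded below by $\ell$ times its probability, which stays positive and contradicts $\Hh(Z\mid\Tcal_{T_k})\to0$. A second, milder subtlety is closed-loop versus open-loop reachability: $\Reach_T(x)$ ranges over all action sequences, and closed-loop policies in a fixed deterministic context only realize particular sequences, so the open-loop reachable set already covers them.
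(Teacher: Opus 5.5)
Your proof is correct and is essentially the paper's argument: assume zero interface empowerment, conclude that the channel state is action-independent and so stays on the uncontrolled path outside $\Ccal^\star$, and use the steering-only hypothesis (in the quantitative form, posterior entropy at least $\ell>0$ outside $\Ccal^\star$) to contradict $\Hh(Z\mid\Tcal_T)\to0$. Your reading of the conclusion as $\Emp_t^C(x)>0$ for some $t\le T$ is the right one, and it makes explicit a step the paper's one-line proof skips: $\Emp_T^C(x)=0$ at the terminal time alone does not freeze intermediate channel states (for example, a door that only $A_0$ can open and that shuts automatically at $t=2$ gives $\Emp_T^C(x)=0$ for all $T\ge2$, yet the agent can still fully identify $Z$).
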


\begin{proof}
If $\Emp_T^C(x)=0$, varying the action sequence leaves the reachable channel-state value unchanged. Starting outside $\Ccal^\star$, all reachable experiments are bounded away from lossless by assumption, so terminal posterior entropy remains positive. This contradicts the assumed achievement of $\Hh(Z\mid\Tcal_T)\to0$.
\end{proof}

\section{Full authority, overwrite, and conditional absorption}
\label{app:authority_absorption}

\paragraph{Full authority requires target reachability.}
A deterministic action consequence means $\Hh(X_{t+1}\mid X_t,A_t,\kappa_t,\Lambda)=0$. Determinism alone can still give small empowerment if all actions map to the same next state. Authority requires a large reachable set.

\begin{definition}[State-conditioned full authority]
\label{def:app_state_authority}
For fixed law bits and horizon $T$, the actuator has state-conditioned full authority if for every $x_0\in\Xcal$ and every target $x^\star\in\Xcal$ there exists an action sequence $a_{0:T-1}$ such that $F_T(x_0,a_{0:T-1},\Lambda)=x^\star$.
\end{definition}

\begin{definition}[Strong overwrite authority]
\label{def:app_overwrite_authority}
The actuator has strong overwrite authority if for every target $x^\star\in\Xcal$ there exists an action sequence $a_{0:T-1}$ such that $F_T(x_0,a_{0:T-1},\Lambda)=x^\star$ for every $x_0$ in the prior support.
\end{definition}
State-conditioned authority is ordinary reachability conditional on knowing the state. Strong overwrite authority makes the terminal microstate independent of the unknown initial state.

\paragraph{Using state-conditioned authority requires identification.}
Let $B_t=\supp(X_t\mid H_t)$ be the set of current states compatible with the agent's information. State-conditioned authority becomes unusable when different states in $B_t$ require different action sequences.

\begin{proposition}[Usable state-conditioned authority requires eliminating control-separating ambiguity]
\label{prop:app_authority_needs_ident}
Assume state-conditioned full authority. If there exist $x,x'\in B_t$ and a target $x^\star$ such that every admissible action sequence fails to drive at least one of $x$ or $x'$ to $x^\star$, then an agent whose policy depends only on $H_t$ lacks a guarantee of reaching $x^\star$. Agent-level full authority therefore requires identifying the control-separating component of $X_t$ or possessing strong overwrite authority.
\end{proposition}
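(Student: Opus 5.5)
The plan is to argue by contradiction using the fact that a policy depending only on $H_t$ must apply the same continuation to every state in $B_t$. Fix $x,x'\in B_t$ and the target $x^\star$ from the hypothesis. Since both states are compatible with $H_t$, there are positive-prior latents $z,z'$ that produce the same history $H_t$. One realizes $X_t=x$ and the other $X_t=x'$.

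First I treat the open-loop case. Any $H_t$-measurable policy, deterministic or conditional on a realized seed $\Sigma=s$, selects the same remaining action sequence $a_{t:T-1}$ on both latents. By hypothesis, every admissible sequence fails to drive at least one of $x,x'$ to $x^\star$. So under the latent that fails, the realized terminal state differs from $x^\star$. That latent has positive posterior mass given $H_t$. Hence the policy cannot guarantee reaching $x^\star$, meaning $\Pr(X_T=x^\star\mid H_t)<1$. For a randomized policy, the same argument applies seed by seed, and averaging over seeds keeps the failure probability positive.

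Second I handle closed-loop continuations. There the policy may branch on later observations $O_{t+1},\dots$, and this is the main obstacle. A closed-loop policy is equivalent to a single action sequence on each realized trajectory, but the two trajectories may diverge once observations separate $x$ from $x'$. The fix is to read the hypothesis's "admissible action sequence" as ranging over admissible closed-loop strategies, or to note where the separation occurs. If later observations separate $x$ from $x'$, then the agent does identify the control-separating component before acting on it, which is exactly the escape clause in the conclusion. If they never separate the two states, the realized action sequences coincide and the open-loop argument applies.

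Finally I derive the dichotomy. A guarantee of reaching $x^\star$ from every state in $B_t$ requires one of two things. Either the posterior support must shrink, through observation, until no such pair $x,x'$ remains, which is identifying the control-separating component of $X_t$. Or a single sequence must work for all of $B_t$. The latter is strong overwrite authority (\cref{def:app_overwrite_authority}) restricted to $B_t$, and the full definition over the prior support supplies it. State-conditioned full authority (\cref{def:app_state_authority}) alone supplies only per-state sequences, so it is usable only after identification.
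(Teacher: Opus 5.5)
Your proof is correct and is essentially the paper's argument: an $H_t$-measurable policy must commit to the same continuation on two positive-mass latents that realize $x$ and $x'$ with identical $H_t$, and by hypothesis that continuation misses $x^\star$ from one of them. Your seed-by-seed treatment and your reading of later separating observations as the identification escape clause are extra care beyond the paper's one-line proof, not a different route. The seed step is valid because compatibility with $H_t$ factorizes into separate conditions on $\Sigma$ and $Z$, so both latents keep positive posterior mass under every compatible seed.
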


\begin{proof}
The policy must choose the same action sequence on histories with the same $H_t$. By assumption, that sequence fails for at least one of the two possible states. Hence the guarantee fails for at least one state in $B_t$.
\end{proof}

\paragraph{Bit-string specialization.}
Assume a finite conserved bit representation
\begin{equation}
  n_s(t)+n_x(t)=n_{\rm tot}, \qquad |\mathcal{M}_t|\le 2^{n_s(t)},\quad |\Xcal_t|=2^{n_x(t)}. \label{eq:app_fixed_bits}
\end{equation}
All retained agent-owned bits count toward $n_s(t)$, including parameters, scratch memory, externalized memory under the agent's control, and agent-owned bridge settings.

\begin{assumption}[Ownership/writability convention]
\label{ass:app_ownership}
When a degree of freedom is counted as agent-owned, it is treated as writable and maintainable by the agent's internal transition dynamics. Boundary shift is therefore an architectural assumption about which bits the agent can reliably retain and reuse. Observation alone leaves physical objects outside the controllable set.
\end{assumption}

\begin{theorem}[Conditional absorption and unification]
\label{thm:app_absorption}
Assume the deterministic bridge model, the conserved bit budget \eqref{eq:app_fixed_bits}, and Assumption~\ref{ass:app_ownership}. Consider an ideal sequence of agents and horizons.
\begin{enumerate}
  \item If the identification route achieves $\Hh(Z\mid M_T)=0$, then $n_s(T)\ge\Hh(Z)$. If $Z=(X_0,\Lambda)$ has entropy $n_{\rm tot}-o(n_{\rm tot})$ and $n_s(0)=o(n_{\rm tot})$, then $n_x(T)=n_{\rm tot}-n_s(T)=o(n_{\rm tot})$.
  \item If the overwrite route achieves strong authority over $\Xcal_T$ with $|\Xcal_T|=2^{n_x(T)}$ and per-step action alphabet size at most $2^{b_t}$, then $\sum_{t<T}b_t\ge n_x(T)$ and deterministic empowerment over $X_T$ equals $n_x(T)$.
  \item In both routes, the remaining effective bridge bottleneck must vanish relative to the target quotient. Identification requires a transcript map injective on the quotient. Overwrite requires an action channel spanning the quotient's terminal degrees of freedom.
\end{enumerate}
\end{theorem}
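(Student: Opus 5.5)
The plan is to treat the three items separately. Each reduces to a counting or data-processing argument already available in the main text.

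\emph{Item 1.} This is a direct instance of \cref{cor:bit_costs_main} with $Q=Z$. If $\Hh(Z\mid M_T)=0$, then
\[
\Hh(Z)=\I(Z;M_T)\le\Hh(M_T)\le\log|\mathcal M_T|\le n_s(T)
\]
by \eqref{eq:app_fixed_bits}. Assumption~\ref{ass:app_ownership} is what licenses counting every retained degree of freedom that carries this information inside $M_T$, so no external bit escapes the budget. For the asymptotic clause, I substitute $\Hh(Z)=n_{\rm tot}-o(n_{\rm tot})$ into this inequality to get $n_s(T)\ge n_{\rm tot}-o(n_{\rm tot})$. Conservation then gives $n_x(T)=n_{\rm tot}-n_s(T)=o(n_{\rm tot})$. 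The hypothesis $n_s(0)=o(n_{\rm tot})$ is not needed for the inequality. Its role is interpretive: the agent's share of bits must have grown from $o(n_{\rm tot})$ to nearly all of $n_{\rm tot}$, which is the boundary shift. I will remark that $\Hh(Z)$ cannot exceed $n_{\rm tot}$, so these orders are consistent.

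\emph{Item 2.} Strong overwrite authority (\cref{def:app_overwrite_authority}) gives, for each $x^\star\in\Xcal_T$, an action sequence whose terminal state is $x^\star$ for every $x_0$ in the prior support. This defines a surjection from the set of action sequences onto $\Xcal_T$. That set has size at most $\prod_{t<T}2^{b_t}$. Hence $2^{\sum_t b_t}\ge|\Xcal_T|=2^{n_x(T)}$, which is the bound $\sum_t b_t\ge n_x(T)$. For the empowerment claim, I fix a context $\chi$. Then $X_T$ is a deterministic function of the action sequence, so \cref{lem:emp_reach} applies. Strong authority gives $\Reach_T^{X}(\chi)=\Xcal_T$, so the empowerment over $X_T$ equals $\log_2|\Xcal_T|=n_x(T)$, the full-authority case of \cref{cor:bit_costs_main}.

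\emph{Item 3.} This is the qualitative reading of the first two items, restricted to a target quotient $Q$. On the identification route, $\Hh(Q\mid\Tcal_T)=0$ is required: $M_T$ is $H_T$-measurable, so $\Hh(Q\mid M_T)=0$ implies $\Hh(Q\mid\Tcal_T)=0$ by data processing. By \cref{thm:fiber}, this means the transcript map is constant-separating, that is, injective on the quotient classes, so $\Delta_{\rm sense}=0$. On the overwrite route, the image of the action channel must cover the terminal values of the quotient, so $\Delta_{\rm act}=0$ by \cref{lem:emp_reach} and the counting argument of Item 2. I will cite \cref{thm:absorption_trichotomy} to justify that these two mechanisms, together with omission, exhaust the possibilities.

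The main obstacle is the precise sense of ``ideal sequence'' and of $o(\cdot)$ in Item 1. I will make this explicit by indexing the agents and horizons by $n_{\rm tot}\to\infty$ and stating every asymptotic claim along that sequence. Under this indexing the step becomes one line of algebra. A smaller subtlety is that $\Hh(Z\mid M_T)=0$ must be read conditional on the private seed. I will condition on $\Sigma=s$ throughout, as in \cref{thm:fiber}.
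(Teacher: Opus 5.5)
Your proposal is correct and follows the paper's proof: Item 1 is the memory lower bound of the memory/empowerment corollary with $Q=Z$ combined with the conserved budget, Item 2 is action-sequence counting plus the empowerment-equals-log-reachability lemma, and Item 3 restates the two resulting bottlenecks. Your version is more explicit than the paper's in two places. You write out the surjection argument for $\sum_{t<T}b_t\ge n_x(T)$, which the paper attributes to the corollary even though the corollary does not state it. You also correctly note that $n_s(0)=o(n_{\rm tot})$ plays no logical role. The route is nonetheless the same.
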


\begin{proof}
The first item is the memory lower bound in \cref{thm:bit_budget}. Under the stated entropy and initial-budget conditions, $n_s(T)=n_{\rm tot}-o(n_{\rm tot})$. The environment allocation is $o(n_{\rm tot})$. The second item is the action-counting lower bound in \cref{thm:bit_budget} and the deterministic reachability-capacity identity in \cref{lem:emp_reach}. The third item restates the necessary bottlenecks exposed by the two lower bounds. The transcript must distinguish the relevant latent quotient, or the action channel must set the terminal quotient directly.
\end{proof}

\begin{remark}[Scope of the absorption statement]
\label{rem:app_absorption_scope}
The theorem is intentionally conditional. The result requires a conserved bit budget, boundary-shift architecture, and ownership/writability convention to support an absorption conclusion. It remains a memory or action-capacity lower bound under less restrictive assumptions. It is a resource statement and carries no claim of physical inevitability. Its role is to show that exact microstate prediction and exact overwrite control have microstate-scale resource requirements.
\end{remark}

\paragraph{Self-modeling limit and plasticity.}
In the boundary-collapse limit of \cref{thm:app_absorption}, the latent state is a function of the agent-owned state. If the policy and update rule are deterministic, the coupled system can be represented as a closed map $M_{t+1}=U(M_t)$. Observations then add zero novel information relative to $M_t$. The incremental observation-to-action plasticity associated with novel evidence vanishes. It concerns novelty rate and leaves responsiveness to already-internalized state as a distinct property.

\section{Empirical instantiations, expanded related work, and broader impacts}
\label{app:empirical_related}

\paragraph{Exact benchmark instantiations.}
The exact benchmark in \cref{tab:exact_results} enumerates the four option families. In the settable-distractor benchmark, $Q\sim\mathrm{Unif}(\{0,1\}^n)$ and an independent $D\in\{0,1\}^m$ is writable. Ungated empowerment chooses $D$. BGP closes the relevance gate and inspects $Q$. In the delayed-sensor benchmark, the first action only changes channel reachability. Immediate information gain is zero for all first actions. BGP values the transition to $C^\star$. In the inspect-overwrite benchmark, terminal log-loss for $V$ is zero under inspection and under overwrite. Inspection preserves $Q$. In quotient transfer, training on $Q_c$ hides the missing $Q_f$ bits until the refined target is evaluated.

Useful metrics are transcript ambiguity $\Hh(Z\mid\Tcal_T)$, target ambiguity $\Hh(Y\mid\Tcal_T)$, reachability $|\Reach_T(x)|$, interface reachability $|\{c(x'):x'\in\Reach_T(x)\}|$, and the directed-information split in \cref{eq:budget}. Reporting return or prediction loss alone hides the mechanism of success.

\paragraph{Evaluation probes.}
A bridge-aware evaluation should include counterfactual probes. Hold the transcript fixed and vary latent state when possible. Vary world-side channel conditions with the agent-side query fixed. Remove settable distractors. Make informative channel states harder to reach. Restrict action bandwidth to test whether predictive competence collapses when overwrite control is unavailable. In learned systems where exact fibers are unavailable, contrastive latent probes, intervention tests, and bandwidth ablations are empirical approximations to the finite quantities in the theorems.

\paragraph{Broader impacts.}
\label{app:broader_impacts}
The positive use of the theory is diagnostic. It asks whether an agent's control authority is over task-relevant variables, whether its sensors can identify safety-relevant hidden state, and whether the bridge between human intent, internal state, tools, and external world is auditable. The same concepts could be misused to design agents that more effectively acquire information, obtain permissions, or manipulate interfaces. Practical deployments should therefore pair interface-refinement objectives with access control, monitoring of tool use, privacy constraints, and limits on control over sensitive world-side channel conditions.


\end{document}